\def\eqref#1{equation~\ref{#1}}
\def\1{\bm{1}}
\def\rva{{\mathbf{a}}}
\def\rvb{{\mathbf{b}}}
\def\rvt{{\mathbf{t}}}
\def\rvv{{\mathbf{v}}}
\def\rvw{{\mathbf{w}}}
\def\rvx{{\mathbf{x}}}
\def\rvy{{\mathbf{y}}}
\def\rvz{{\mathbf{z}}}
\def\vv{{\bm{v}}}
\def\vx{{\bm{x}}}
\def\vy{{\bm{y}}}
\def\vz{{\bm{z}}}
\DeclareMathAlphabet{\mathsfit}{\encodingdefault}{\sfdefault}{m}{sl}
\SetMathAlphabet{\mathsfit}{bold}{\encodingdefault}{\sfdefault}{bx}{n}
\def\gG{{\mathcal{G}}}
\def\gL{{\mathcal{L}}}
\def\sR{{\mathbb{R}}}
\def\sT{{\mathbb{T}}}
\def\sV{{\mathbb{V}}}
\def\sW{{\mathbb{W}}}
\def\sX{{\mathbb{X}}}
\def\sY{{\mathbb{Y}}}
\def\sZ{{\mathbb{Z}}}
\newcommand{\E}{\mathbb{E}}
\newcommand{\KL}{D_{\mathrm{KL}}}
\definecolor{cblue}{RGB}{0, 83, 155}
\definecolor{cred}{RGB}{238, 49, 42}
\definecolor{cyellow}{RGB} {255, 223, 0}
    \definecolor{corange}{RGB} {245, 132, 43}
\definecolor{cpurple}{RGB}{123, 67, 154}
\definecolor{clightblue}{RGB}{0, 174, 239}
\definecolor{cburgundy}{RGB}{198, 12, 70}
\definecolor{cgreen}{RGB}{0, 158, 71}
\definecolor{cpurpler}{RGB}{158, 32, 110}
\pgfplotsset{compat=newest}
\newenvironment{customcorollary}[1]
  {\innercustomcorollary}
  {\endinnercustomcorollary}
\newtheorem{theorem}{Theorem}[section]
\newtheorem{corollary}{Corollary}[theorem]
\newtheorem{proposition}{Proposition}[section]
\theoremstyle{definition}
\newtheorem{definition}{Definition}
\theoremstyle{remark}
\title{Learning Robust Representations via \\ Multi-View Information Bottleneck}
\author{ Marco Federici\\
University of Amsterdam\\
\texttt{m.federici@uva.nl} \\
\And
Anjan Dutta \\
University of Exeter \\
\texttt{a.dutta@exeter.ac.uk} \\
\And
Patrick Forr\'e \\
University of Amsterdam \\
\texttt{p.d.forre@uva.nl} \\
\And
Nate Kushmann \\
Microsoft Research \\
\texttt{nkushman@microsoft.com} \\
\And
Zeynep Akata \\
University of Tuebingen \\
\texttt{zeynep.akata@uni-tuebingen.de}
}
\begin{document}

\maketitle

\begin{abstract}
   The information bottleneck principle provides an information-theoretic method for representation learning, by training an encoder to retain all information which is relevant for predicting the label while minimizing the amount of other, excess information in the representation. The original formulation, however, requires labeled data to identify the superfluous information.  In this work, we extend this ability to the multi-view unsupervised setting, where two views of the same underlying entity are provided but the label is unknown. This enables us to identify superfluous information as that not shared by both views. A theoretical analysis leads to the definition of a new multi-view model that produces state-of-the-art results on the Sketchy dataset and label-limited versions of the MIR-Flickr dataset.  We also extend our theory to the single-view setting by taking advantage of standard data augmentation techniques, empirically showing better generalization capabilities when compared to common unsupervised approaches for representation learning.

\end{abstract}

\section{Introduction}

The goal of deep representation learning~\citep{lecun2015deep} is to transform a raw observational input, $\rvx$, into a, typically lower-dimensional, representation, $\rvz$, that contains the information relevant for a given task or set of tasks.
Significant progress has been made in deep learning via supervised representation learning, where the labels, $\rvy$, for the downstream task are known while $p(\rvy|\rvx)$ is learned directly~\citep{sutskever2012imagenet,hinton2012deep}. 
Due to the cost of acquiring large labeled datasets, a recently renewed focus on unsupervised representation learning seeks to generate representations, $\rvz$, that are useful for a wide variety of different tasks where little to no labeled data is available~\citep{devlin2018bert,radford2019language}.

Our work is based on 
the information bottleneck principle~\citep{Tishby2000} where a representation becomes less affected by
nuisances by discarding all information from the input that is not useful for a given task, resulting in increased robustness.  In the supervised setting, one can directly
 apply the information bottleneck principle
 by minimizing the mutual information between the data $\rvx$ and its representation $\rvz$, $I(\rvx;\rvz)$, while simultaneously maximizing the mutual information between $\rvz$ and the label $\rvy$~\citep{Alemi2016}.
 In the unsupervised setting, 
discarding only superfluous information is more challenging, as without labels the model 
cannot directly identify which information is relevant.  Recent literature \citep{Devon2018, Oord2018}
has focused on the InfoMax objective {\em maximizing}  $I(\rvx,\rvz)$ instead of minimizing it, to guarantee that all the predictive information is retained by the representation, but doing nothing to discard the irrelevant information.

In this paper, we extend the information bottleneck method to the unsupervised multi-view setting. To do this, we rely on a basic assumption of the multi-view literature -- that each view provides the same {\em task-relevant information}~\citep{zhao2017multi}.  Hence, one can improve generalization by discarding all the information not shared by both views from the representation. 
We do this by maximizing the mutual information between the representations of the two views
(Multi-View InfoMax objective) while at the same time eliminating the information not shared between them, since it is guaranteed to be superfluous.
The resulting representations are more robust for the given task as they have eliminated view specific nuisances.

Our contributions are three-fold:
   (1) We extend the information bottleneck principle to the unsupervised multi-view setting and provide a rigorous theoretical analysis of its application.  
   (2) We define a new model \footnote{Code available at \url{https://github.com/mfederici/Multi-View-Information-Bottleneck}} that empirically leads to state-of-the-art results in the 
   low-label setting on two standard multi-view datasets, Sketchy and MIR-Flickr. 
   (3) By exploiting data augmentation techniques,
 we empirically show that the representations learned by our model in single-view settings are more robust than existing unsupervised representation learning methods, connecting our theory to the choice of augmentation strategy.

\section{Preliminaries and Framework}

The challenge of representation learning can be formulated as finding a distribution $p(\rvz|\rvx)$ that maps data observations $\rvx\in\sX$ into a representation $\rvz\in\sZ$, capturing some desired characteristics.
Whenever the end goal involves predicting a label $\rvy$, we consider only  $\rvz$ that are discriminative enough to identify $\rvy$.
This requirement can be quantified by considering the amount of label information that remains accessible after encoding the data, and is known as sufficiency of $\rvz$ for $\rvy$ \citep{Achille2018}:
\begin{definition}{Sufficiency:}
\label{def:sufficiency}
A representation $\rvz$ of $\rvx$ is sufficient for $\rvy$ if and only if $I(\rvx;\rvy|\rvz)=0$.
\end{definition}

Any model that has access to a sufficient representation $\rvz$ must be able to predict $\rvy$ at least as accurately as if it has access to the original data $\rvx$ instead. In fact, $\rvz$ is sufficient for $\rvy$ if and only if the amount of information regarding the task is unchanged by the encoding procedure (see Proposition~\ref{lemma:equivalence} in the Appendix):
\begin{align}
    I(\rvx;\rvy|\rvz)=0 &\iff I(\rvx;\rvy)=I(\rvy;\rvz).\label{eq:suff}
\end{align}

Among sufficient representations, the ones that result in better generalization for unlabeled data instances are particularly appealing.
When $\rvx$ has higher information content than $\rvy$, some of the information in $\rvx$ must be irrelevant for the prediction task.
This can be better understood by subdividing $I(\rvx;\rvz)$ into two components by using the chain rule of mutual information (see Appendix~\ref{app:mi_properties}):
\begin{align}
\label{eq:decomposition}    
I(\rvx;\rvz) &= \underbrace{I(\rvx;\rvz|\rvy)}_{\text{superfluous information}} + \underbrace{I(\rvy;\rvz)}_{\text{predictive information}}. 
\end{align}
Conditional mutual information $I(\rvx;\rvz|\rvy)$ represents the information in $\rvz$ that is not predictive of $\rvy$, i.e. \textbf{superfluous information}. While $I(\rvy;\rvz)$ determines how much label information is accessible from the  representation. Note that this last term is independent of the representation as long as $\rvz$ is sufficient for $\rvy$ (see Equation~\ref{eq:suff}).
As a consequence, a sufficient representation contains minimal data information whenever $I(\rvx;\rvz|\rvy)$ is minimized.

Minimizing the amount of superfluous information can be done directly only in supervised settings.
In fact, reducing $I(\rvx;\rvz)$ without violating the sufficiency constraint necessarily requires making some additional assumptions on the predictive task (see Theorem~\ref{th:no_free_compression} in the Appendix). In the next section we describe the basis of our technique, a strategy to safely reduce the information content of a representation even when the label $\rvy$ is not observed, by exploiting redundant information in the form of an additional view on the data.
\section{Multi-View Information Bottleneck}
\label{sec:method}

As a motivating example, consider $\rvv_1$ and $\rvv_2$ to be two images of the same object from different view-points and let $\rvy$ be its label.
Assuming that the object is clearly distinguishable from both $\rvv_1$ and let $\rvv_2$, any representation $\rvz$ containing all information accessible from both views would also contain the necessary label information. Furthermore, if $\rvz$ captures only the details that are visible from both pictures, it would 
eliminate the view-specific details and reduce the sensitivity of the representation to view-changes.
The theory to support this intuition is described in the following where $\rvv_1$ and $\rvv_2$ are jointly observed and referred to as data-views.

\subsection{Sufficiency and Robustness in the Multi-view setting}
\label{subsec:theory}
In this section we extend our analysis of sufficiency and minimality to the multi-view setting.

Intuitively,  we can guarantee that $\rvz$ is sufficient for predicting $\rvy$ even without knowing $\rvy$ by ensuring that $\rvz$
maintains all information which is shared by $\rvv_1$ and $\rvv_2$.  This intuition relies on a basic assumption of the multi-view environment -- that the two views provide the same predictive information. To formalize this we define \textbf{redundancy}.
\begin{definition}{Redundancy:}
 $\rvv_1$ is redundant with respect to $\rvv_2$ for $\rvy$ if and only if $I(\rvy;\rvv_1|\rvv_2)=0$ 
\end{definition}
Intuitively, a view $\rvv_1$ is redundant for a task whenever it is irrelevant for the prediction of $\rvy$ if $\rvv_2$ is already observed.
Whenever $\rvv_1$ and $\rvv_2$ are \textbf{mutually redundant} ($\rvv_1$ is redundant with respect to $\rvv_2$ for $\rvy$, and vice-versa), we can show the following:
\begin{customcorollary}{1}\label{cor:multiview-sufficiency}
Let $\rvv_1$ and $\rvv_2$ be two mutually redundant views for a target $\rvy$ and let $\rvz_1$ be a representation of $\rvv_1$.  If $\rvz_1$ is sufficient for $\rvv_2$ ($I(\rvv_1;\rvv_2|\rvz_1)=0$) then $\rvz_1$ is as predictive for $\rvy$ as the joint observation of the two views ($I(\rvv_1\rvv_2;\rvy)=I(\rvy;\rvz_1)$).
\end{customcorollary}
In other words, whenever it is possible to assume mutual redundancy, any representation which contains all the information shared by both views (the redundant information) is as predictive as their joint observation. 

By factorizing the mutual information between $\rvv_1$ and $\rvz_1$ analogously to Equation~\ref{eq:decomposition}, we can identify two components:
\begin{align*}
    I(\rvv_1; \rvz_1) = \underbrace{I(\rvv_1;\rvz_1|\rvv_2)}_{\text{superfluous information}} + \underbrace{I(\rvv_2;\rvz_1)}_{\text{predictive information for $\rvv_2$}}.
\end{align*}
Since $I(\rvv_2;\rvz_1)$ has to be maximal if we want the representation to be sufficient for the label, we conclude that $I(\rvv_1;\rvz_1)$ can be reduced by minimizing $I(\rvv_1;\rvz_1|\rvv_2)$. This term intuitively represents the information $\rvz_1$ contains which is unique to $\rvv_1$ and is not predictable by observing $\rvv_2$. Since we assumed mutual redundancy between the two views, this information must be irrelevant for the predictive task and, therefore, it can be safely discarded. The proofs and formal assertions for the above statements and Corollary~\ref{cor:multiview-sufficiency} can be found in Appendix \ref{app:proofs}.

The less the two views have in common, the more $I(\rvv_1; \rvz_1)$ can be reduced without violating sufficiency for the label, and consequently, the more robust the resulting representation.  At the extreme, $\rvv_1$ and $\rvv_2$ share only label information, in which case we can show that $\rvz_1$ is minimal for $\rvy$ and our method is identical to the supervised information bottleneck method without needing to access the labels.  Conversely, if $\rvv_1$ and $\rvv_2$ are identical, then our method degenerates to the InfoMax principle since no information can be safely discarded (see Appendix~\ref{app:equivalences}).

\begin{figure}[t]
    \begin{minipage}{.57\textwidth}
        \includegraphics[clip, trim=3.9cm 18cm 9.5cm 2.5cm, width=1.00\textwidth]{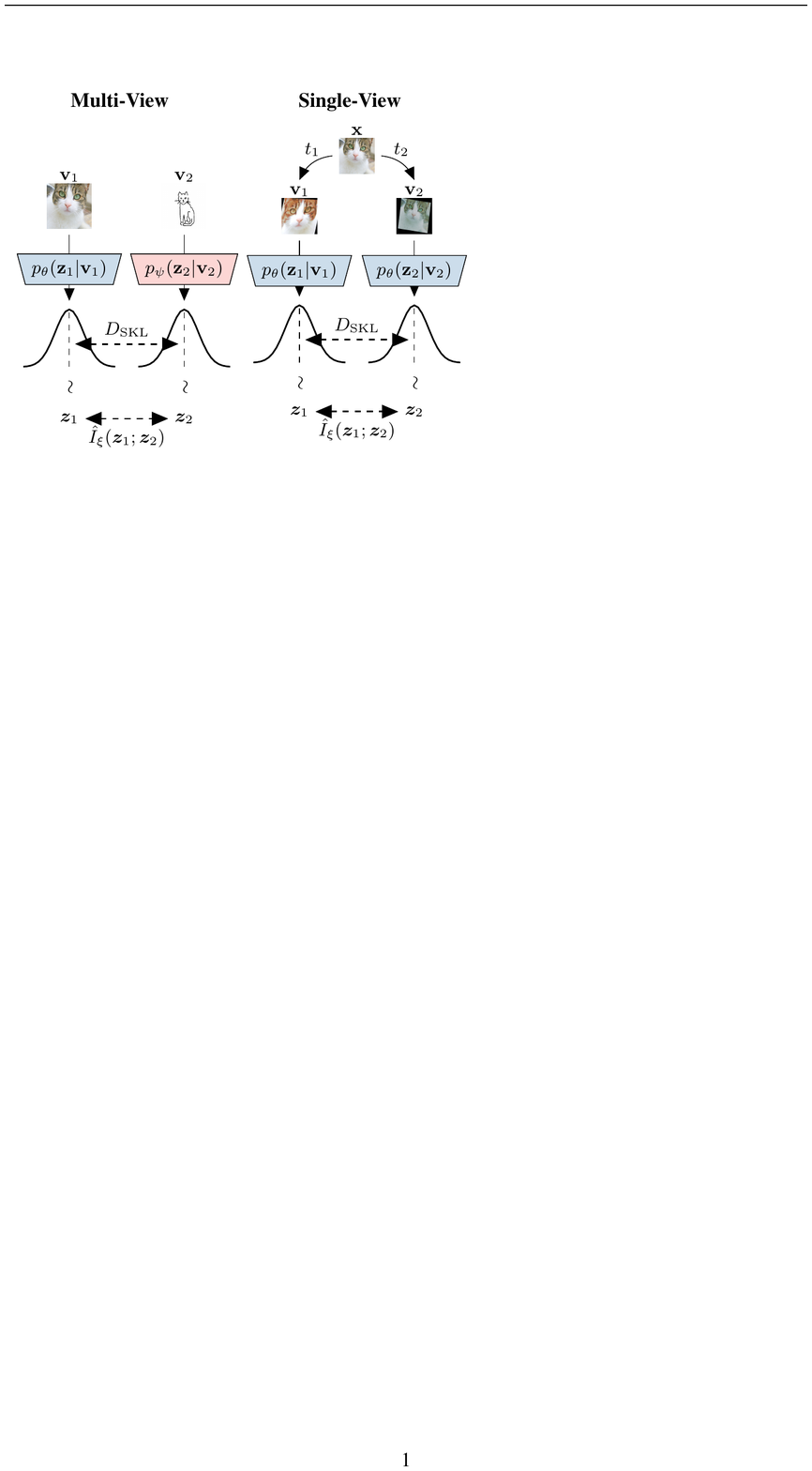}
    \end{minipage}
    \begin{minipage}{.43\textwidth}
    \vspace{-0.5cm}
    \small{\input{figures/algorithm.tex}}
    \end{minipage}
\caption{Visualization our Multi-View Information Bottleneck model for both multi-view and single-view settings, where  $\hat{I}_\xi(\vz_1;\vz_2)$ refers to the sample-based parametric mutual information estimation. Whenever  $p(\rvv_1)$ and $p(\rvv_2)$ have the same distribution, the two encoders can share their parameters.
}
\label{fig:model}
\end{figure}
\subsection{The Multi-View Information Bottleneck Loss Function}
Given $\rvv_1$ and $\rvv_2$ that satisfy the mutual redundancy condition for a label $\rvy$, we would like to define an objective function for the representation $\rvz_1$ of $\rvv_1$ that discards as much information as possible without losing any label information. 
In Section~\ref{subsec:theory} we showed that we can obtain sufficiency for $\rvy$ by ensuring that the representation $\rvz_1$ of $\rvv_1$ is sufficient for $\rvv_2$, and that decreasing $I(\rvz_1;\rvv_1|\rvv_2)$ will increase the robustness of the representation by discarding irrelevant information.  So we can combine these two requirements using a relaxed Lagrangian objective to obtain the minimal sufficient representation $\rvz_1$ for $\rvv_2$:
\begin{align}
    \mathcal{L}_1(\theta;\lambda_1) = I_\theta(\rvz_1;\rvv_1|\rvv_2) -  \lambda_1\ I_\theta(\rvv_2;\rvz_1),\label{eq:original_loss}
\end{align}
where $\theta$ denotes the dependency on the parameters of the encoder $p_\theta(\rvz_1|\rvv_1)$, and $\lambda_1$ represents the Lagrangian multiplier introduced by the constrained optimization.
Symmetrically, we define a loss $\gL_2$ to optimize the parameters $\psi$ of a conditional distribution $p_\psi(\rvz_2|\rvv_2)$ that defines a minimal sufficient representation $\rvz_2$ of the second view $\rvv_2$ for $\rvv_1$:
\begin{align}
    \gL_2(\psi;\lambda_2)=I_\psi(\rvz_2;\rvv_2|\rvv_1) - \lambda_2\ I_\psi(\rvv_1;\rvz_2),\label{eq:symm_loss}
\end{align}
By defining $\rvz_1$ and $\rvz_2$ on the same domain $\sZ$ and re-parametrizing the Lagrangian multipliers, the average of the two loss functions $\gL_1$ and $\gL_2$ can be upper bounded as follows:

\begin{align}
    \gL_{MIB}(\theta,\psi; \beta) &= - I_{\theta\psi}(\rvz_1;\rvz_2) + \beta\ D_{SKL}(p_\theta(\rvz_1|\rvv_1)||p_\psi(\rvz_2|\rvv_2)),\label{eq:loss_in_practice}
\end{align}

where $D_{SKL}$ represents the symmetrized KL divergence obtained by averaging the expected value of $\KL(p_\theta(\rvz_1|\rvv_1)||p_\psi(\rvz_2|\rvv_2))$ and $\KL(p_\psi(\rvz_2|\rvv_2)||p_\theta(\rvz_1|\rvv_1))$ for joint observations of the two views, while the coefficient $\beta$ defines the trade-off between sufficiency and robustness of the representation, which is a hyper-parameter in this work. The resulting Multi-View Infomation Bottleneck (MIB) model (Equation~\ref{eq:loss_in_practice}) is visualized in Figure~\ref{fig:model}, while the batch-based computation of the loss function is summarized in Algorithm~\ref{alg:train}.

The symmetrized KL divergence $D_{SKL}(p_\theta(\rvz_1|\rvv_1)||p_\psi(\rvz_2|\rvv_2))$ can be computed directly whenever $p_\theta(\rvz_1|\rvv_1)$ and $p_\psi(\rvz_2|\rvv_2)$ have a known density, while the mutual information between the two representations $I_{\theta\psi}(\rvz_1;\rvz_2)$ can be maximized by using any sample-based differentiable mutual information lower bound. 
We tried the Jensen-Shannon $I_{\text{JS}}$ \citep{Devon2018, Poole2019} and the InfoNCE $I_{\text{NCE}}$ \citep{Oord2018} estimators.  These both require introducing an auxiliary parameteric model $C_\xi(\vz_1,\vz_2)$ which is jointly optimized during the training procedure using re-parametrized samples from $p_\theta(\rvz_1|\rvv_1)$ and $p_\psi(\rvz_2|\rvv_2)$.  
The full derivation for the MIB loss function can be found in Appendix~\ref{app:loss}.

\subsection{Self-supervision and Invariance}
\label{subsec:augmentation}

Our method can also be applied when multiple views are not available by taking advantage of standard data augmentation techniques.  This allows learning invariances directly from the augmented data, rather than requiring them to be built into the model architecture.

By picking a class $\sT$ of data augmentation functions $t:\sX \to \sW$ that do not affect label information, it is possible to artificially build views that satisfy mutual redundancy for $\rvy$.
Let $\rvt_1$ and $\rvt_2$ be two random variables over $\sT$, then $\rvv_1:=\rvt_1(\rvx)$ and $\rvv_2:=\rvt_2(\rvx)$ must be mutually redundant for $\rvy$.
Since data augmentation functions in $\sT$ do not affect label information ($I(\rvv_1;\rvy)=I(\rvv_2;\rvy)=I(\rvx;\rvy)$), a representation $\rvz_1$ of $\rvv_1$ that is sufficient for $\rvv_2$ must contain same amount of predictive information as $\rvx$. Formal proofs for this statement can be found in Appendix~\ref{app:augmentation}.

Whenever the two transformations for the same observation are independent ($I(\rvt_1;\rvt_2|\rvx)=0$), they introduce uncorrelated variations in the two views, which will be discarded when creating a representation using our training objective. As an example, if $\sT$ represents a set of small translations, the two resulting views will differ by a small shift. Since this information is not shared, any $\rvz_1$ which is optimal according to the MIB objective must discard fine-grained details regarding the position.

To enable parameter sharing between the encoders, we generate the two views $\rvv_1$ and $\rvv_2$  by independently sampling two functions from the same function class $\sT$ with uniform probability.
As a result, $\rvt_1$ and $\rvt_2$ will have the same distribution, and so the two generated views will also have the same marginals ($p(\rvv_1) = p(\rvv_2)$). For this reason, the two conditional distributions $p_\theta(\rvz_1|\rvv_1)$ and $p_\psi(\rvz_2|\rvv_2)$ can share their parameters and only one encoder is necessary.  Full (or partial) parameter sharing can be also applied in the multi-view settings whenever the two views have the same (or similar) marginal distributions.

\section{Related Work}
\label{subsec:comparison}

The relationship between our method and past work on representation learning is best described using the {\em Information Plane}~\citep{Tishby2000}.  In this setting, each representation $\rvz$ of $\rvx$ for a predictive task $\rvy$ can be characterised by the amount of information regarding the raw observation $I(\rvx;\rvz)$ and the corresponding measure of accessible predictive information $I(\rvy;\rvz)$ ($x$ and $y$ axis respectively on Figure~\ref{fig:info_plane}).
Ideally, a good representation would be maximally informative about the label while retaining a minimal amount of information from the observations (top left corner of the parallelogram). Further details on the Information Plane and the bounds visualized in Figure~\ref{fig:info_plane} are described in Appendix~\ref{app:info_plane}.

\begin{wrapfigure}{r}{0.5\textwidth}
  \begin{center}
    \resizebox{0.5\textwidth}{!}{\begin{tikzpicture}
    \def\ixy{1.2}
    \def\ixyy{2.5}
    \def\ivv{2.7}
    \def\hx{5.3}
    
    \node[below] (origin) at (0,-0.2) {\footnotesize $0$};
    \node[left] (origin_y) at (-0.2,0) {\footnotesize $0$};
    \begin{small}
    \node[left] (ixy) at (-0.1,\ixyy) {\footnotesize $I(\rvx;\rvy)$};
    \node[below] (hx) at (\hx,-0.2) {\footnotesize $H(\rvx)$};
    \node[below] (ixy_x) at (\ixy,-0.2) {\footnotesize  $I(\rvx;\rvy)$};
    \node[below] (hxy) at (\hx-\ixy,-0.2) {\footnotesize  $H(\rvx|\rvy)$};
    \node[below] (ivv) at (\ivv,-0.2) {\footnotesize  $I(\rvx;\rvv_2)$};
    \end{small}
    \fill[cyellow!10] (0, 0) -- (\ixy,\ixyy) -- (\hx,\ixyy) -- (\hx-\ixy,0) -- cycle;
    \draw[black!30] (0, 0) -- (\ixy,\ixyy) -- (\hx,\ixyy) -- (\hx-\ixy,0) -- cycle;
    
    \draw[->, thick] (0,-0.2) to (0,\ixyy+0.3);
    \draw[->, thick] (-0.2,0) to (\hx+0.4,0);
    \node[rotate=90](label_y) at (-1.3, 1.4) {\large $I(\rvy;\rvz)$};
    \node[below](label_x) at (3.1, -0.8) {\large $I(\rvx;\rvz)$};
    
    \draw[dashed, color=black!50] (ixy.east) to (\hx,\ixyy);
    \draw[dashed, color=black!50] (\hx,-0.2) to (\hx,\ixyy);
    \draw[dashed, color=black!50] (\ixy,-0.2) to (\ixy,\ixyy);
    \draw[dashed, color=black!50] (ivv.north) to (\ivv,\ixyy);
    \draw[dashed, color=black!50] (\hx-\ixy,-0.2) to (\hx-\ixy,\ixyy);
    
    \draw[corange, line width=0.5mm, opacity=0.7]  (0,0) to[out=10,in=190] (\hx,\ixyy) {};
    \draw[corange, line width=0.5mm, opacity=0.7]  (0,0) to[out=0,in=-145] (\hx,\ixyy) {};
    \draw[corange, line width=0.5mm, opacity=0.7]  (0,0) to[out=45,in=180] (\hx,\ixyy) {};
    
    \node (infomax) at (\hx,\ixyy) {};

    
    
    \draw[line width=0.7mm, cgreen, opacity=0.85] (\ixy,\ixyy) to node[above]{} (\hx,\ixyy);


    \draw[line width=1.2mm, cpurple, opacity=0.8, dashed] (\ivv,\ixyy) to node[above]{} (\hx,\ixyy);
    
    \def\lc{\hx+1.2}
    \def\hl{\ixyy-0.3}
    
    \node (infomax) at (\hx,\ixyy) {};
    \node (core) at (\ivv,\ixyy) {};
    
    \fill[white, shift={(-3.5pt,-3.5pt)}, opacity=1.0] (\ivv,\ixyy) rectangle ++(7pt,7pt);
    \fill[clightblue, shift={(-3pt,-3pt)}, opacity=0.9] (\ivv,\ixyy) rectangle ++(6pt,6pt);
    \fill[white, opacity=1.0] (infomax.center) circle[radius=4.0pt];
    \fill[cburgundy, opacity=0.9] (infomax.center) circle[radius=3.5pt];
    \fill[white, opacity=1.0] (\ixy-0.18,\ixyy-0.13)--(\ixy,\ixyy+0.18)--(\ixy+0.18,\ixyy-0.13) --cycle;
    \fill[cblue, opacity=0.9] (\ixy-0.15,\ixyy-0.1)--(\ixy,\ixyy+0.15)--(\ixy+0.15,\ixyy-0.1) --cycle;
    
    \node[right] (feasible) at (\lc+0.2,\hl+0.5) {\footnotesize Feasible region};
    \node[right] (sufficient) at (\lc+0.2,\hl) {\footnotesize Sufficiency};
    \node[right] (infomaxlabel) at (\lc+0.2,\hl-0.8-0.5) {\footnotesize InfoMax};
    \node[right] (corebzlabel) at (\lc+0.2,\hl-0.8-1.0) {\footnotesize MV-InfoMax};
    \node[right] (bvaelabel) at (\lc+0.2,\hl-0.8) {\footnotesize $\beta$-VAE};
    \node[right] (iblabel) at (\lc+0.2,\hl-0.8-2.0) {\footnotesize Supervised IB};
    \node[right] (corelabel) at (\lc+0.2,\hl-0.8-1.5) {\footnotesize MIB (ours)};

    \draw[black!50, fill=cyellow!10, shift={(-5pt,-5pt)}, opacity=1.0] (\lc,\hl+0.5) rectangle ++(10pt,10pt);
    
    \draw[cgreen, line width=0.5mm, opacity=1] (\lc-0.2,\hl)--(\lc+0.2,\hl);
    
    \fill[cburgundy, opacity=0.9](\lc,\hl-0.8-0.5) circle[radius=3.5pt];
    \draw[line width=1.2mm, cpurple, opacity=0.8, dashed] (\lc-0.2,\hl-1.0-0.8) -- (\lc+0.2,\hl-1.0-0.8
    );
    \fill[clightblue, shift={(-3pt,-3pt)}, opacity=0.9] (\lc,\hl-1.5-0.8) rectangle ++(6pt,6pt);
    \draw[corange, line width=0.5mm, opacity=1] (\lc-0.2,\hl-0.8)--(\lc+0.2,\hl-0.8);
    
    \fill[cblue, shift={(-3pt,-3pt)}, opacity=0.9] (\lc+0.125,\hl-2.0-0.8+0.225)--(\lc,\hl-2.0-0.8)--(\lc+0.25,\hl-2.0-0.8) --cycle;
    

\end{tikzpicture}

    
    
    
    
    
    
  \end{center}
  \caption{Information Plane determined by $I(\rvx;\rvz)$ (x-axis) and $I(\rvy;\rvz)$ (y-axis). Different objectives are compared based on their target.}
    \label{fig:info_plane}
\end{wrapfigure}

Thanks to recent progress in mutual information estimation \citep{Nguyen2008, Belghazi2018, Poole2019}, the InfoMax principle \citep{Linsker88} has gained attention for unsupervised representation learning %
\citep{Devon2018, Oord2018}.
Since the InfoMax objective involves maximizing $I(\rvx;\rvz)$, the resulting representation aims to preserve all the information regarding the raw observations (top right corner in Figure~\ref{fig:info_plane}). 

Concurrent work has applied the InfoMax principle in the Multi-View setting \citep{Ji2018, Henaff2019, Tian2019, Bachman2019}, aiming to maximize mutual information between the representation $\rvz$ of a first data-view $\rvx$ and a second one $\rvv_2$.
The target representation for the Multi-View InfoMax (MV-InfoMax) models should contain at least the amount of information in $\rvx$ that is predictive for $\rvv_2$, targeting the region $I(\rvz;\rvx)\ge I(\rvx;\rvv_2)$ on the Information Plane (purple dotted line in Figure~\ref{fig:info_plane}).
Since the MV-InfoMax has no incentive to discard any information regarding $\rvx$ from $\rvz$, a representation that is optimal according to the InfoMax principle is also optimal for any MV-InfoMax model. Our model with $\beta=0$ (Equation~\ref{eq:loss_in_practice}) belongs to this family of objectives since the incentive to remove superfluous information is removed.
Despite their success, \citet{Tschannen2019} has shown that the effectiveness of the InfoMax models is due to inductive biases introduced by the architecture and estimators rather than the training objective itself, since the InfoMax and MV-InfoMax objectives can be trivially maximized by using invertible encoders.

On the other hand, Variational Autoencoders (VAEs) \citep{Kingma2013} define a training objective that balances compression and reconstruction error \citep{Alemi2017} through an hyper-parameter $\beta$.
Whenever $\beta$ is close to 0, the VAE objective aims for a lossless representation, approaching the same region of the Information Plane as the one targeted by InfoMax \citep{Barber2003}. 
When $\beta$ approaches large values, the representation becomes more compressed, showing increased generalization and disentanglement \citep{Higgins2017, Burgess2018}, and, as $\beta$ approaches infinity, $I(\rvz;\rvx)$ goes to zero.
During this transition from low to high $\beta$, however, there are no guarantees that VAEs will retain label information (Theorem~\ref{th:no_free_compression} in the Appendix). The path between the two regimes depends on how well the label information aligns with the inductive bias introduced by encoder \citep{Rezende2015, Kingma2016}, prior \citep{Tomczak2017} and decoder architectures \citep{Gulrajani2016, Chen2016}.

The idea of discarding irrelevant information was introduced in \citet{Tishby2000} and identified as one of the possible reasons behind the generalization capabilities of deep neural networks by \citet{Tishby2015} and \citet{Achille2018}.
Representations based on the information bottleneck principle explicitly minimize the amount of superfluous information in the representation while retaining all the label information from the data (top-left corner of the Information Plane in Figure~\ref{fig:info_plane}). This direction of research has been explored for both single-view \citep{Alemi2017} and multi-view settings \citep{Wang2019}, even if explicit label supervision is required to train the representation $\rvz$.

In contrast to all of the above, our work is the first to explicitly identify and discard superfluous information from the representation in the unsupervised multi-view setting.  
This is because unsupervised models based on the $\beta$-VAE objective remove information indiscriminately without identifying which part is relevant for teh predictive task, and the InfoMax and Multi-View InfoMax methods do not explicitly try to remove superfluous information at all. The MIB objective, on the other hand, results in the representation with the least superfluous information, i.e. the most robust among the representations that are optimal according to Multi-View InfoMax, without requiring any additional label supervision.

\section{Experiments}
\label{sec:results}
In this section we demonstrate the effectiveness of our model against state-of-the-art baselines in both the multi-view and single-view setting.  In the single-view setting, we also estimate the coordinates on the Information Plane for each of the baseline methods as well as our method to validate the theory in Section~\ref{sec:method}.

The results reported in the following sections are obtained using the Jensen-Shannon $I_{JS}$ \citep{Devon2018, Poole2019} estimator, which resulted in better performance for MIB and the other InfoMax-based models (Table~\ref{tab:MNIST_full} in the supplementary material).
In order to facilitate the comparison between the effect of the different loss functions, the same estimator is used across the different models.

\subsection{Multi-View Tasks}
We compare MIB on the sketch-based image retrieval \citep{Sangkloy2016} and Flickr multiclass image classification \citep{Huiskes2008} tasks with domain specific and prior multi-view learning methods.

\subsubsection{Sketch-Based Image Retrieval}
\noindent\textbf{Dataset.} The Sketchy dataset \citep{Sangkloy2016} consists of 12,500 images and 75,471 hand-drawn sketches of objects from 125 classes. As in \citet{Liu2017}, we also include another 60,502 images from the ImageNet~\citep{Deng2009} from the same classes, which results in total 73,002 natural object images. As per the experimental protocol of \citet{Zhang2018}, a total of 6,250 sketches (50 sketches per category) are randomly selected and removed from the training set for testing purpose, which leaves 69,221 sketches for training the model. %

\noindent\textbf{Experimental Setup.} The sketch-based image retrieval task is a ranking of 73,002 natural images according to the unseen test (query) sketch.  Retrieval is done for our model by generating representations for the query sketch as well as all natural images, and ranking the image by the euclidean distance of their representation from the sketch representation. The baselines use various domain specific ranking methodologies.  Model performance is computed based on the class of the ranked pictures corresponding to the query sketch. The training set consists of pairs of image $\rvv_1$ and sketch $\rvv_2$ randomly selected from the same class, to ensure that both views contain the equivalent label information (mutual redundancy).

 Following recent work \citep{Zhang2018,Dutta2019SEM-PCYC}, we use features extracted from images and sketches by a VGG ~\citep{Simonyan2014} architecture trained for classification on the TU-Berlin dataset \citep{Eitz2012}. The resulting flattened 4096-dimensional feature vectors are fed to our image and sketch encoders to produce a 64-dimensional representation. Both encoders consist of neural networks with hidden layers of 2048 and 1024 units respectively. Size of the representation and regularization strength $\beta$ are tuned on a validation sub-split. We evaluate MIB on five different train/test splits and report mean and standard deviation in Table~\ref{tab:sketchy}.
Further details on our training procedure and architecture are in Appendix~\ref{app:experiments}.

\begin{table}[t]
    \label{tab:sketchy}
    \small
    \resizebox{0.96\textwidth}{!}{
    \begin{minipage}{.4\textwidth}
        \centering
        \begin{tabular}{c c c}
     $\rvv_1\in\sR^{4096}$& $\rvv_2\in\sR^{4096}$ & $\rvy\in[125]$  \\\hline
      \parbox{0.25\textwidth}{\vspace{0.1cm}\includegraphics[width=0.25\textwidth, bb= 0 0 256 256]{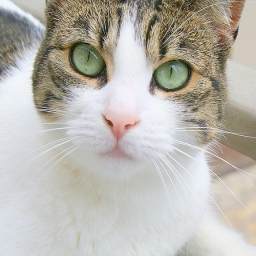}\vspace{0.1cm}} &  \parbox{0.25\textwidth}{\includegraphics[width=0.25\textwidth, bb= 0 0 256 256]{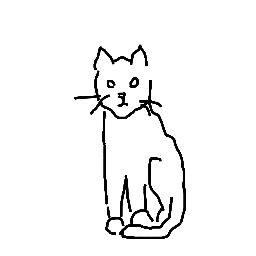}} &
     \parbox{0.25\textwidth}{\centering ``cat''}\\
     \parbox{0.25\textwidth}{\vspace{0.1cm}\includegraphics[width=0.25\textwidth, bb= 0 0 256 256]{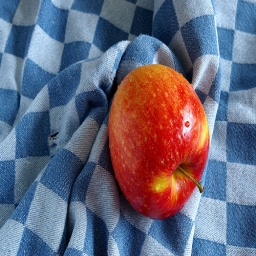}\vspace{0.1cm}} &  \parbox{0.25\textwidth}{\includegraphics[width=0.25\textwidth, bb= 0 0 256 256]{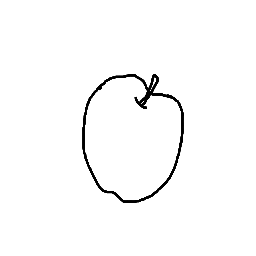}} &  \parbox{0.25\textwidth}{\centering
         ``apple''}
\end{tabular}

    \end{minipage}
    \begin{minipage}{.6\textwidth}
        \centering
        \begin{tabular}{lll}
            Method                                                    & mAP@all & Prec@200 \\ \hline
            SaN \citep{Qian2015}                                                    & 0.208   & 0.292    \\
            GN Triplet \citep{Sangkloy2016}                                              & 0.529   & 0.716    \\
            Siamese CNN \citep{Qi2016}                                             & 0.481   & 0.612    \\
            Siamese-AlexNet \citep{Liu2017}                                          & 0.518   & 0.690    \\
            Triplet-AlexNet \citep{Liu2017}                                          & 0.573   & 0.761    \\
            DSH$^{*}$ \citep{Liu2017} & 0.711   & \bf{0.866}    \\
            GDH$^*$ \citep{Zhang2018} & 0.810   & -        \\\hline
            

            MV-InfoMax$^\text{\ref{note:1}}$ & 0.008  &  0.008    \\
            \bf{MIB }& \bf{0.856$\pm$0.005}  &  0.848$\pm$0.005    \\
            MIB$^*$ (64-bits)                                         &  0.851$\pm$ 0.004   & 0.834$\pm$0.003   
        \end{tabular}
    \end{minipage}
    }
     \caption{Examples of the two views and class label from the Sketchy dataset (on the left) and comparison between MIB and other popular models in literature on the sketch-based image retrieval task (on the right).
    $^*$ denotes models that use a 64-bits binary representation. The results for  MIB corresponds to $\beta=1$.}
\end{table}

\noindent\textbf{Results.} Table~\ref{tab:sketchy} shows that the our model achieves strong performance for both mean average precision (mAP@all) and precision at 200 (Prec@200), suggesting that the representation is able to capture the common class information between the paired pictures and sketches. The effectiveness of MIB on the retrieval task can be mostly attributed to the regularization introduced with the symmetrized KL divergence between the two encoded views. In addition to discarding view-private information, this term actively aligns the representations of $\rvv_1$ and $\rvv_2$, making the MIB model especially suitable for retrieval tasks
\addtocounter{footnote}{1}\footnotetext[\thefootnote]{\label{note:1}These results are included only for completeness, as the Multi-View InfoMax objective does not produce consistent representations for the two views so there is no straight-forward way to use it for ranking.}

\subsubsection{MIR-Flickr}
\noindent\textbf{Dataset.} The MIR-Flickr dataset \citep{Huiskes2008} consists of 1M images annotated with 800K distinct user tags.
Each image is represented by a vector of 3,857 hand-crafted image features ($\rvv_1$), while the 2,000 most frequent tags are used to produce a 2000-dimensional multi-hot encoding ($\rvv_2$) for each picture. The dataset is divided into labeled and unlabeled sets that respectively contain 975K and 25K images, where the labeled set also contains 38 distinct topic classes together with the user tags.  Training images with less than two tags are removed, which reduces the total number of training samples to 749,647 pairs \citep{Sohn2014, Wang2016}. The labeled set contains 5 different splits of train, validation and test sets of size 10K/5K/10K respectively.

\noindent\textbf{Experimental Setup.}
Following  standard procedure in the literature \citep{Srivastava2014, Wang2016}, we train our model on the unlabeled pairs of images and tags.
Then a multi-label logistic classifier is trained from the representation of 10K labeled train images to the corresponding macro-categories. 
The quality of the representation is assessed based on the performance of the trained logistic classifier on the labeled test set.  Each encoder consists of a multi-layer perceptron of 4 hidden layers with ReLU activations learning two 1024-dimensional representations $\rvz_1$ and $\rvz_2$ for images $\rvv_1$ and tags $\rvv_2$ respectively. 
Examples of the two views, labels, and further details on the training procedure are in Appendix~\ref{app:experiments}. 

\begin{figure}[t]
    \centering
    \small
    \begin{minipage}{.54\textwidth}
        \centering
        \begin{tabular}{ll}
    Method          & mAP   \\ \hline
    Original Inputs$^\dagger$ & 0.48  \\
    CCA$^\dagger$             & 0.529 \\
    Contrastive$^\dagger$ \citep{Hermann2013} & 0.565 \\
    DCCA$^\dagger$ \citep{Galen13}            & 0.573 \\
    MVAE-var$^\dagger$ \citep{Ngiam2011} & 0.595 \\
    VCCA$^\dagger$ \citep{Wang2016}           & 0.605 \\
    VCCA-private$^\dagger$ & 0.615\\
    bi-VCCA-private$^\dagger$ & 0.626 \\\hline
    {\bf{MV-InfoMax}}       & {\bf{0.751$\pm$0.02}} \\
    {\bf{MIB} ($\beta=10^{-3}$)}       &  {\bf{0.749$\pm$0.02}} \\
    MIB ($\beta=1$)       &  0.688$\pm$0.02
\end{tabular}
    \end{minipage}
    \begin{minipage}{.45\textwidth}
        \centering
        \vspace{0.06\textwidth}
        \includegraphics[width=1.0\textwidth, bb= 0 0 95mm 75mm]{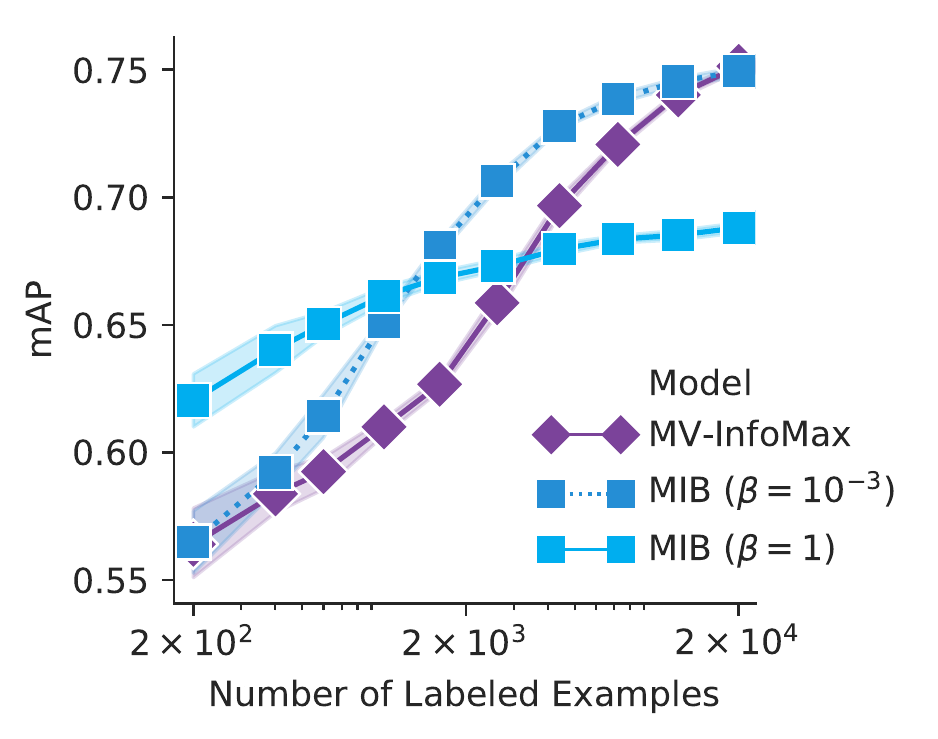}
    \end{minipage}
    \caption{Left: mean average precision (mAP) of the classifier trained on different multi-view representations for the MIR-Flickr task. Right: comparing the performance for different values of $\beta$ and percentages of given labeled examples (from 1\% up to 100\%). Each model uses encoders of comparable size, producing a 1024 dimensional representation. $^\dagger$ results from \citet{Wang2016}.}
    \label{tab:flickr}
\end{figure}

\noindent\textbf{Results.}
Our MIB model is compared with other popular multi-view learning models in Figure~\ref{tab:flickr} for $\beta=0$ (Multi-View InfoMax), $\beta=1$ and $\beta=10^{-3}$ (best on validation set).
Although the tuned MIB performs similarly to Multi-View InfoMax with a large number of labels, it outperforms it when fewer labels are available.  Furthermore, by choosing a larger $\beta$ the accuracy of our model drastically increases in scarce label regimes, while slightly reducing the accuracy when all the labels are observed (see right side of Figure~\ref{tab:flickr}).
This effect is likely due to a violation of the mutual redundancy constraint (see Figure~\ref{fig:flickr_example} in the supplementary material) which can be compensated with smaller values of $\beta$ for less aggressive compression. 

A possible reason for the effectiveness of MIB against some of the other baselines may be its ability to use mutual information estimators that do not require reconstruction.  Both Multi-View VAE (MVAE) and Deep Variational CCA (VCCA) rely on a reconstruction term to capture cross-modal information, which can introduce bias that decreases performance.

\subsection{Self-supervised Single-View Task}
\label{subsec:singleviewexp}

In this section, we compare the performance of different unsupervised learning models by measuring their data efficiency and empirically estimating the coordinates of their representation on the Information Plane.
Since accurate estimation of mutual information is extremely expensive \citep{McAllester2018}, we focus on relatively small experiments that aim to uncover the difference between popular approaches for representation learning.

\noindent\textbf{Dataset.} The dataset is generated from MNIST by creating the two views, $\rvv_1$ and $\rvv_2$, via the application of data augmentation consisting of small affine transformations and independent pixel corruption to each image.  These are kept small enough to ensure that label information is not effected.  Each pair of views is generated from the same underlying image, so no label information is used in this process (details in Appendix~\ref{app:experiments}).

\noindent\textbf{Experimental Setup.} To evaluate, we train the encoders using the unlabeled multi-view dataset just described, and then fix the representation model.  A logistic regression model is trained using the resulting representations along with a subset of labels for the training set, and we report the accuracy of this model on a disjoint test set as is standard for the unsupervised representation learning literature \citep{Tschannen2019, Tian2019, Oord2018}.
We estimate $I(\rvx;\rvz)$ and $I(\rvy;\rvz)$ using mutual information estimation networks trained from scratch on the final representations using batches of joint samples $\{(\vx^{(i)},\vy^{(i)},\vz^{(i)})\}_{i=1}^B \sim p(\rvx,\rvy)p_\theta(\rvz|\rvx)$.

All models are trained using the same encoder architecture consisting of 2 layers of 1024 hidden units with ReLU activations, resulting in 64-dimensional representations. The same data augmentation procedure was also applied for single-view architectures and models were trained for 1 million iterations with batch size $B=64$.
\begin{figure}[t]
    \centering
    \begin{minipage}{0.38\textwidth}
    \includegraphics[width=1\textwidth, bb=0 0 120mm 76mm]{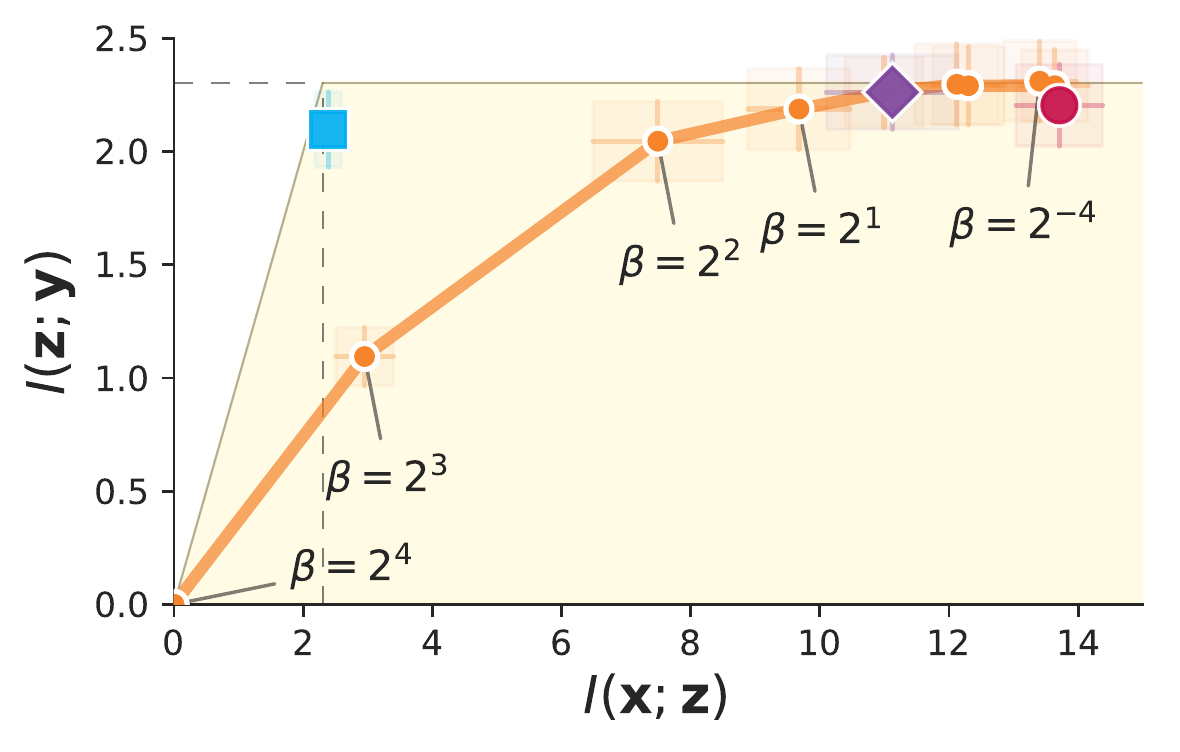}
    \end{minipage}
    \begin{minipage}{0.38\textwidth}
    \includegraphics[width=1\textwidth,  bb=0 0 120mm 76mm]{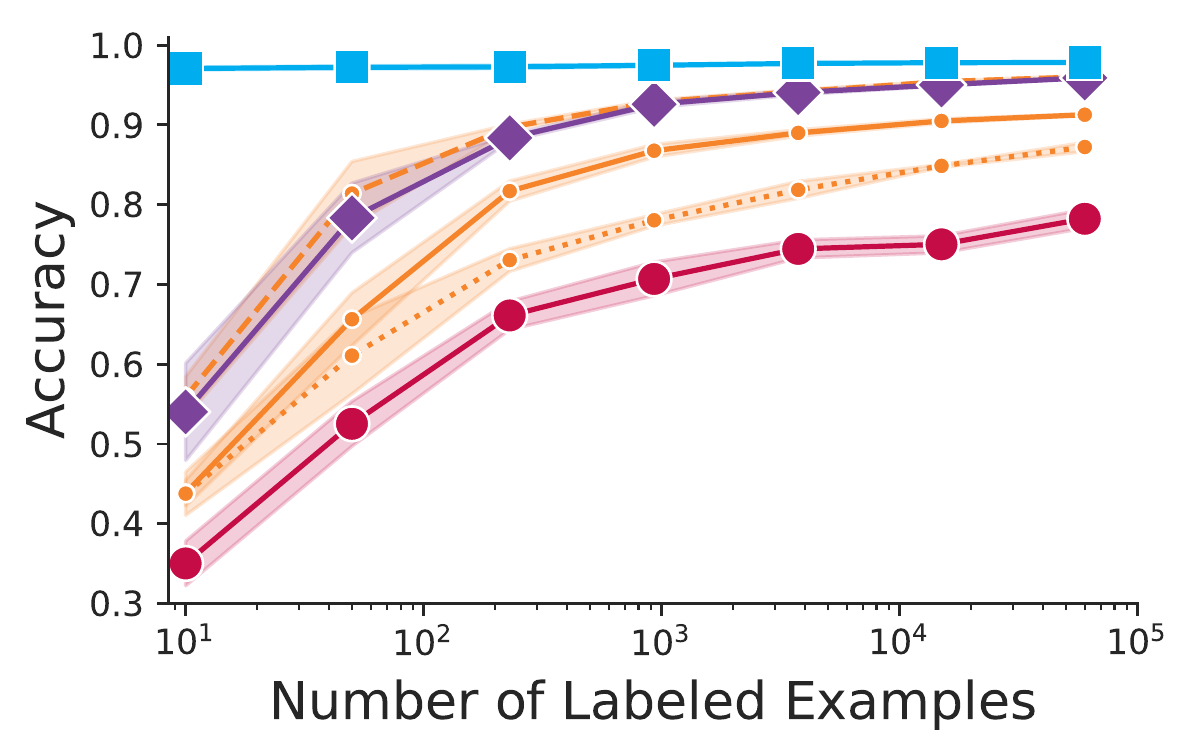}
    \end{minipage}
    \begin{minipage}{0.2\textwidth}
    \includegraphics[width=1\textwidth,  bb=0 0 43mm 46mm]{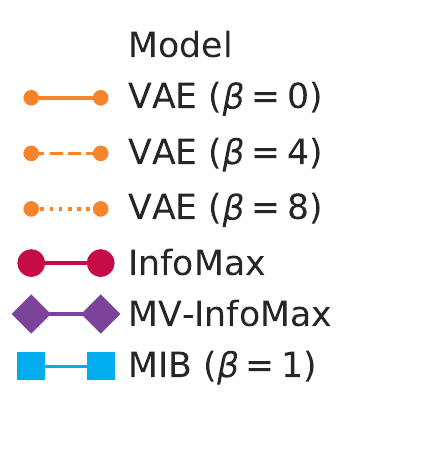}
    \end{minipage}
    \caption{Comparing the representations obtained with different objectives on MNIST dataset. The empirical estimation of the coordinates on the Information Plane (in nats on the left) is followed by the respective classification accuracy for different number of randomly sampled labels (from 1 example per label up to 6000 examples per label). Representations that discard more observational information tend to perform better in scarce label regimes. The measurements used to produce the two graphs are reported in Appedix~\ref{app:mnist_results}.}
    \label{fig:sample_complexity}
\end{figure}

\noindent\textbf{Results.} Figure~\ref{fig:sample_complexity} summarizes the results.
The empirical measurements of mutual information reported on the Information Plane are consistent with the theoretical analysis reported in Section~\ref{subsec:comparison}: models that retain less information about the data while maintaining the maximal amount of predictive information, result in better classification performance at low-label regimes, confirming the hypothesis that discarding irrelevant information yields robustness and more data-efficient representations.
Notably, the MIB model with $\beta=1$ retains almost exclusively label information, hardly decreasing the classification performance when only one label is used for each data point.

\section{Conclusions and Future work}
In this work, we introduce Multi-View Information Bottleneck, a novel method 
for taking advantage of multiple data-views to produce robust representations for downstream tasks.
In our experiments, we compared MIB empirically against other approaches in the literature on three such tasks: sketch-based image retrieval, multi-view and unsupervised representation learning. The strong performance obtained in the different areas show that Multi-View Information Bottleneck can be practically applied to various tasks for which the paired observations are either readily available or artificially produced.  Furthermore, the positive results on the MIR-Flickr dataset show that our model can work well in practice even when mutual redundancy holds only approximately.

There are multiple extensions that we would like to explore in future work.  One interesting direction would be considering more than two views.  In Appendix~\ref{app:non-trans} we discuss why the mutual redundancy condition cannot be trivially extended to more than two views, but we still believe such an extension is possible.  Secondly, we believe that exploring the role played by different choices of data augmentation could bridge the gap between the Information Bottleneck principle and the literature on invariant neural networks \citep{Bloem2019}, which are able to exploit known symmetries and structure of the data to remove superfluous information.

\subsubsection*{Acknowledgments}
We thank Andy Keller, Karen Ullrich, Maximillian Ilse and the anonymous reviewers for their feedback and insightful comments. This work has received funding from the ERC under the Horizon 2020 program (grant agreement No. 853489). The Titan Xp and Titan V used for this research were donated by the NVIDIA Corporation.

\bibliography{iclr2020_conference}

\begin{thebibliography}{52}
\providecommand{\natexlab}[1]{#1}
\providecommand{\url}[1]{\texttt{#1}}
\expandafter\ifx\csname urlstyle\endcsname\relax
  \providecommand{\doi}[1]{doi: #1}\else
  \providecommand{\doi}{doi: \begingroup \urlstyle{rm}\Url}\fi

\bibitem[{Achille} \& {Soatto}(2018){Achille} and {Soatto}]{Achille2018}
Alessandro {Achille} and Stefano {Soatto}.
\newblock {Emergence of Invariance and Disentanglement in Deep
  Representations}.
\newblock \emph{JMLR}, 2018.

\bibitem[{Alemi} et~al.(2017){Alemi}, {Fischer}, {Dillon}, and
  {Murphy}]{Alemi2016}
Alexander~A. {Alemi}, Ian {Fischer}, Joshua~V. {Dillon}, and Kevin {Murphy}.
\newblock {Deep Variational Information Bottleneck}.
\newblock In \emph{ICLR}, 2017.

\bibitem[{Alemi} et~al.(2018){Alemi}, {Poole}, {Fischer}, {Dillon}, {Saurous},
  and {Murphy}]{Alemi2017}
Alexander~A. {Alemi}, Ben {Poole}, Ian {Fischer}, Joshua~V. {Dillon}, Rif~A.
  {Saurous}, and Kevin {Murphy}.
\newblock {Fixing a Broken ELBO}.
\newblock In \emph{ICML}, 2018.

\bibitem[Andrew et~al.(2013)Andrew, Arora, Bilmes, and Livescu]{Galen13}
Galen Andrew, Raman Arora, Jeff Bilmes, and Karen Livescu.
\newblock Deep canonical correlation analysis.
\newblock In \emph{ICML}, 2013.

\bibitem[{Bachman} et~al.(2019){Bachman}, {Devon Hjelm}, and
  {Buchwalter}]{Bachman2019}
Philip {Bachman}, R~{Devon Hjelm}, and William {Buchwalter}.
\newblock {Learning Representations by Maximizing Mutual Information Across
  Views}.
\newblock \emph{arXiv}, 2019.

\bibitem[Barber \& Agakov(2003)Barber and Agakov]{Barber2003}
David Barber and Felix Agakov.
\newblock The im algorithm: A variational approach to information maximization.
\newblock In \emph{NIPS}, 2003.

\bibitem[{Bloem-Reddy} \& {Whye Teh}(2019){Bloem-Reddy} and {Whye
  Teh}]{Bloem2019}
Benjamin {Bloem-Reddy} and Yee {Whye Teh}.
\newblock {Probabilistic symmetry and invariant neural networks}.
\newblock \emph{arXiv}, 2019.

\bibitem[{Burgess} et~al.(2018){Burgess}, {Higgins}, {Pal}, {Matthey},
  {Watters}, {Desjardins}, and {Lerchner}]{Burgess2018}
Christopher~P. {Burgess}, Irina {Higgins}, Arka {Pal}, Loic {Matthey}, Nick
  {Watters}, Guillaume {Desjardins}, and Alexander {Lerchner}.
\newblock {Understanding disentangling in $\beta$-VAE}.
\newblock \emph{arXiv}, 2018.

\bibitem[{Chen} et~al.(2017){Chen}, {Kingma}, {Salimans}, {Duan}, {Dhariwal},
  {Schulman}, {Sutskever}, and {Abbeel}]{Chen2016}
Xi~{Chen}, Diederik~P. {Kingma}, Tim {Salimans}, Yan {Duan}, Prafulla
  {Dhariwal}, John {Schulman}, Ilya {Sutskever}, and Pieter {Abbeel}.
\newblock {Variational Lossy Autoencoder}.
\newblock In \emph{ICLR}, 2017.

\bibitem[{Deng} et~al.(2009){Deng}, {Dong}, {Socher}, {Li}, {Kai Li}, and {Li
  Fei-Fei}]{Deng2009}
J.~{Deng}, W.~{Dong}, R.~{Socher}, L.~{Li}, {Kai Li}, and {Li Fei-Fei}.
\newblock Imagenet: A large-scale hierarchical image database.
\newblock In \emph{CVPR}, 2009.

\bibitem[Devlin et~al.(2018)Devlin, Chang, Lee, and Toutanova]{devlin2018bert}
Jacob Devlin, Ming-Wei Chang, Kenton Lee, and Kristina Toutanova.
\newblock Bert: Pre-training of deep bidirectional transformers for language
  understanding.
\newblock \emph{arXiv}, 2018.

\bibitem[{Devon Hjelm} et~al.(2019){Devon Hjelm}, {Fedorov},
  {Lavoie-Marchildon}, {Grewal}, {Bachman}, {Trischler}, and
  {Bengio}]{Devon2018}
R~{Devon Hjelm}, Alex {Fedorov}, Samuel {Lavoie-Marchildon}, Karan {Grewal},
  Phil {Bachman}, Adam {Trischler}, and Yoshua {Bengio}.
\newblock {Learning deep representations by mutual information estimation and
  maximization}.
\newblock In \emph{ICLR}, 2019.

\bibitem[Dutta \& Akata(2019)Dutta and Akata]{Dutta2019SEM-PCYC}
Anjan Dutta and Zeynep Akata.
\newblock Semantically tied paired cycle consistency for zero-shot sketch-based
  image retrieval.
\newblock In \emph{CVPR}, 2019.

\bibitem[Eitz et~al.(2012)Eitz, Hays, and Alexa]{Eitz2012}
Mathias Eitz, James Hays, and Marc Alexa.
\newblock How do humans sketch objects?
\newblock \emph{ACM TOG}, 2012.

\bibitem[Gong et~al.(2013)Gong, Lazebnik, Gordo, and Perronnin]{Gong2013ITQ}
Y.~Gong, S.~Lazebnik, A.~Gordo, and F.~Perronnin.
\newblock Iterative quantization: A procrustean approach to learning binary
  codes for large-scale image retrieval.
\newblock \emph{TPAMI}, 2013.

\bibitem[{Gulrajani} et~al.(2017){Gulrajani}, {Kumar}, {Ahmed}, {Taiga},
  {Visin}, {Vazquez}, and {Courville}]{Gulrajani2016}
Ishaan {Gulrajani}, Kundan {Kumar}, Faruk {Ahmed}, Adrien~Ali {Taiga},
  Francesco {Visin}, David {Vazquez}, and Aaron {Courville}.
\newblock {PixelVAE: A Latent Variable Model for Natural Images}.
\newblock In \emph{ICLR}, 2017.

\bibitem[{H{\'e}naff} et~al.(2019){H{\'e}naff}, {Razavi}, {Doersch}, {Eslami},
  and {van den Oord}]{Henaff2019}
Olivier~J. {H{\'e}naff}, Ali {Razavi}, Carl {Doersch}, S.~M.~Ali {Eslami}, and
  Aaron {van den Oord}.
\newblock {Data-Efficient Image Recognition with Contrastive Predictive
  Coding}.
\newblock \emph{arXiv}, 2019.

\bibitem[{Hermann} \& {Blunsom}(2014){Hermann} and {Blunsom}]{Hermann2013}
Karl~Moritz {Hermann} and Phil {Blunsom}.
\newblock {Multilingual Distributed Representations without Word Alignment}.
\newblock In \emph{ICLR}, 2014.

\bibitem[Higgins et~al.(2017)Higgins, Matthey, Pal, Burgess, Glorot, Botvinick,
  Mohamed, and Lerchner]{Higgins2017}
Irina Higgins, Loic Matthey, Arka Pal, Christopher Burgess, Xavier Glorot,
  Matthew Botvinick, Shakir Mohamed, and Alexander Lerchner.
\newblock beta-vae: Learning basic visual concepts with a constrained
  variational framework.
\newblock In \emph{ICLR}, 2017.

\bibitem[Hinton et~al.(2012)Hinton, Deng, Yu, Dahl, Mohamed, Jaitly, Senior,
  Vanhoucke, Nguyen, Kingsbury, et~al.]{hinton2012deep}
Geoffrey Hinton, Li~Deng, Dong Yu, George Dahl, Abdel-rahman Mohamed, Navdeep
  Jaitly, Andrew Senior, Vincent Vanhoucke, Patrick Nguyen, Brian Kingsbury,
  et~al.
\newblock Deep neural networks for acoustic modeling in speech recognition.
\newblock \emph{SPM}, 2012.

\bibitem[Huiskes \& Lew(2008)Huiskes and Lew]{Huiskes2008}
Mark~J. Huiskes and Michael~S. Lew.
\newblock The mir flickr retrieval evaluation.
\newblock In \emph{ICMIR}, pp.\  39--43, 2008.

\bibitem[{Ishmael Belghazi} et~al.(2018){Ishmael Belghazi}, {Baratin},
  {Rajeswar}, {Ozair}, {Bengio}, {Courville}, and {Devon Hjelm}]{Belghazi2018}
Mohamed {Ishmael Belghazi}, Aristide {Baratin}, Sai {Rajeswar}, Sherjil
  {Ozair}, Yoshua {Bengio}, Aaron {Courville}, and R~{Devon Hjelm}.
\newblock {MINE: Mutual Information Neural Estimation}.
\newblock In \emph{ICML}, 2018.

\bibitem[{Ji} et~al.(2019){Ji}, {Henriques}, and {Vedaldi}]{Ji2018}
Xu~{Ji}, Jo{\~a}o~F. {Henriques}, and Andrea {Vedaldi}.
\newblock {Invariant Information Clustering for Unsupervised Image
  Classification and Segmentation}.
\newblock In \emph{ICCV}, 2019.

\bibitem[Jimenez~{Rezende} \& {Mohamed}(2015)Jimenez~{Rezende} and
  {Mohamed}]{Rezende2015}
Danilo Jimenez~{Rezende} and Shakir {Mohamed}.
\newblock {Variational Inference with Normalizing Flows}.
\newblock In \emph{ICML}, 2015.

\bibitem[{Kingma} \& {Welling}(2014){Kingma} and {Welling}]{Kingma2013}
Diederik~P {Kingma} and Max {Welling}.
\newblock {Auto-Encoding Variational Bayes}.
\newblock In \emph{ICLR}, 2014.

\bibitem[Kingma et~al.(2016)Kingma, Salimans, Jozefowicz, Chen, Sutskever, and
  Welling]{Kingma2016}
Durk~P Kingma, Tim Salimans, Rafal Jozefowicz, Xi~Chen, Ilya Sutskever, and Max
  Welling.
\newblock Improved variational inference with inverse autoregressive flow.
\newblock In \emph{NIPS}, 2016.

\bibitem[LeCun et~al.(2015)LeCun, Bengio, and Hinton]{lecun2015deep}
Yann LeCun, Yoshua Bengio, and Geoffrey Hinton.
\newblock Deep learning.
\newblock \emph{Nature}, 2015.

\bibitem[{Linsker}(1988)]{Linsker88}
R.~{Linsker}.
\newblock Self-organization in a perceptual network.
\newblock \emph{Computer}, 1988.

\bibitem[{Liu} et~al.(2017){Liu}, {Shen}, {Shen}, {Liu}, and {Shao}]{Liu2017}
Li~{Liu}, Fumin {Shen}, Yuming {Shen}, Xianglong {Liu}, and Ling {Shao}.
\newblock {Deep Sketch Hashing: Fast Free-hand Sketch-Based Image Retrieval}.
\newblock In \emph{CVPR}, 2017.

\bibitem[{McAllester} \& {Stratos}(2018){McAllester} and
  {Stratos}]{McAllester2018}
David {McAllester} and Karl {Stratos}.
\newblock {Formal Limitations on the Measurement of Mutual Information}.
\newblock \emph{arXiv}, 2018.

\bibitem[Ngiam et~al.(2011)Ngiam, Khosla, Kim, Nam, Lee, and Ng]{Ngiam2011}
Jiquan Ngiam, Aditya Khosla, Mingyu Kim, Juhan Nam, Honglak Lee, and Andrew~Y.
  Ng.
\newblock Multimodal deep learning.
\newblock In \emph{ICML}, 2011.

\bibitem[{Nguyen} et~al.(2008){Nguyen}, {Wainwright}, and {Jordan}]{Nguyen2008}
XuanLong {Nguyen}, Martin~J. {Wainwright}, and Michael~I. {Jordan}.
\newblock {Estimating divergence functionals and the likelihood ratio by convex
  risk minimization}.
\newblock In \emph{NIPS}, 2008.

\bibitem[{Poole} et~al.(2019){Poole}, {Ozair}, {van den Oord}, {Alemi}, and
  {Tucker}]{Poole2019}
Ben {Poole}, Sherjil {Ozair}, Aaron {van den Oord}, Alexander~A. {Alemi}, and
  George {Tucker}.
\newblock {On Variational Bounds of Mutual Information}.
\newblock In \emph{ICML}, 2019.

\bibitem[{Qi} et~al.(2016){Qi}, {Song}, {Zhang}, and {Liu}]{Qi2016}
Y.~{Qi}, Y.~{Song}, H.~{Zhang}, and J.~{Liu}.
\newblock Sketch-based image retrieval via siamese convolutional neural
  network.
\newblock In \emph{ICIP}, 2016.

\bibitem[Radford et~al.(2019)Radford, Wu, Child, Luan, Amodei, and
  Sutskever]{radford2019language}
Alec Radford, Jeffrey Wu, Rewon Child, David Luan, Dario Amodei, and Ilya
  Sutskever.
\newblock Language models are unsupervised multitask learners.
\newblock \emph{OpenAI Blog}, 2019.

\bibitem[Sangkloy et~al.(2016)Sangkloy, Burnell, Ham, and Hays]{Sangkloy2016}
Patsorn Sangkloy, Nathan Burnell, Cusuh Ham, and James Hays.
\newblock The sketchy database: learning to retrieve badly drawn bunnies.
\newblock \emph{ACM TOG}, 2016.

\bibitem[{Simonyan} \& {Zisserman}(2014){Simonyan} and
  {Zisserman}]{Simonyan2014}
Karen {Simonyan} and Andrew {Zisserman}.
\newblock {Very Deep Convolutional Networks for Large-Scale Image Recognition}.
\newblock \emph{arXiv}, 2014.

\bibitem[Sohn et~al.(2014)Sohn, Shang, and Lee]{Sohn2014}
Kihyuk Sohn, Wenling Shang, and Honglak Lee.
\newblock Improved multimodal deep learning with variation of information.
\newblock In \emph{NIPS}, 2014.

\bibitem[Srivastava \& Salakhutdinov(2014)Srivastava and
  Salakhutdinov]{Srivastava2014}
Nitish Srivastava and Ruslan Salakhutdinov.
\newblock Multimodal learning with deep boltzmann machines.
\newblock \emph{JMLR}, 2014.

\bibitem[Su et~al.(2015)Su, Yuan, and Zhu]{Su2015PerfEvalIR}
Wanhua Su, Yan Yuan, and Mu~Zhu.
\newblock A relationship between the average precision and the area under the
  roc curve.
\newblock In \emph{ICTIR}, 2015.

\bibitem[Sutskever et~al.(2012)Sutskever, Hinton, and
  Krizhevsky]{sutskever2012imagenet}
Ilya Sutskever, Geoffrey~E Hinton, and A~Krizhevsky.
\newblock Imagenet classification with deep convolutional neural networks.
\newblock In \emph{NIPS}, 2012.

\bibitem[{Tian} et~al.(2019){Tian}, {Krishnan}, and {Isola}]{Tian2019}
Yonglong {Tian}, Dilip {Krishnan}, and Phillip {Isola}.
\newblock {Contrastive Multiview Coding}.
\newblock \emph{arXiv}, 2019.

\bibitem[{Tishby} \& {Zaslavsky}(2015){Tishby} and {Zaslavsky}]{Tishby2015}
Naftali {Tishby} and Noga {Zaslavsky}.
\newblock {Deep Learning and the Information Bottleneck Principle}.
\newblock In \emph{ITW}, 2015.

\bibitem[{Tishby} et~al.(2000){Tishby}, {Pereira}, and {Bialek}]{Tishby2000}
Naftali {Tishby}, Fernando~C. {Pereira}, and William {Bialek}.
\newblock {The information bottleneck method}.
\newblock \emph{arXiv}, 2000.

\bibitem[{Tomczak} \& {Welling}(2018){Tomczak} and {Welling}]{Tomczak2017}
Jakub~M. {Tomczak} and Max {Welling}.
\newblock {VAE with a VampPrior}.
\newblock In \emph{AISTATS}, 2018.

\bibitem[{Tschannen} et~al.(2019){Tschannen}, {Djolonga}, {Rubenstein},
  {Gelly}, and {Lucic}]{Tschannen2019}
Michael {Tschannen}, Josip {Djolonga}, Paul~K. {Rubenstein}, Sylvain {Gelly},
  and Mario {Lucic}.
\newblock {On Mutual Information Maximization for Representation Learning}.
\newblock \emph{arXiv}, 2019.

\bibitem[{van den Oord} et~al.(2018){van den Oord}, {Li}, and
  {Vinyals}]{Oord2018}
Aaron {van den Oord}, Yazhe {Li}, and Oriol {Vinyals}.
\newblock {Representation Learning with Contrastive Predictive Coding}.
\newblock \emph{arXiv}, 2018.

\bibitem[Wang et~al.(2019)Wang, Boudreau, Luo, Tan, and Zhou]{Wang2019}
Qi~Wang, Claire Boudreau, Qixing Luo, Pang-Ning Tan, and Jiayu Zhou.
\newblock Deep multi-view information bottleneck.
\newblock In \emph{SIAM}, 2019.

\bibitem[{Wang} et~al.(2016){Wang}, {Yan}, {Lee}, and {Livescu}]{Wang2016}
Weiran {Wang}, Xinchen {Yan}, Honglak {Lee}, and Karen {Livescu}.
\newblock {Deep Variational Canonical Correlation Analysis}.
\newblock \emph{arXiv}, 2016.

\bibitem[{Yu} et~al.(2017){Yu}, {Yang}, {Song}, {Xiang}, and
  {Hospedales}]{Qian2015}
Qian {Yu}, Yongxin {Yang}, Yi-Zhe {Song}, Tao {Xiang}, and Timothy
  {Hospedales}.
\newblock {Sketch-a-Net that Beats Humans}.
\newblock \emph{IJCV}, 2017.

\bibitem[{Zhang} et~al.(2018){Zhang}, Shen, Liu, Zhu, Yu, Shao, Tao~Shen, and
  Van~Gool]{Zhang2018}
Jingyi {Zhang}, Fumin Shen, Li~Liu, Fan Zhu, Mengyang Yu, Ling Shao, Heng
  Tao~Shen, and Luc Van~Gool.
\newblock Generative domain-migration hashing for sketch-to-image retrieval.
\newblock In \emph{ECCV}, 2018.

\bibitem[Zhao et~al.(2017)Zhao, Xie, Xu, and Sun]{zhao2017multi}
Jing Zhao, Xijiong Xie, Xin Xu, and Shiliang Sun.
\newblock Multi-view learning overview: Recent progress and new challenges.
\newblock \emph{IF}, 2017.

\end{thebibliography}
\bibliographystyle{iclr2020_conference}

\newpage{}
\appendix
\section{Properties of Mutual Information and Entropy}
\label{app:mi_properties}
In this section we enumerate some of the properties of mutual information that are used to prove the theorems reported in this work.
For any random variables $\rvw$, $\rvx$, $\rvy$ and $\rvz$:
\begin{enumerate}
    \item[$(P_1)$] Positivity:
    \begin{align*}
        I(\rvx;\rvy) \ge 0, I(\rvx;\rvy|\rvz)\ge 0
    \end{align*}
    \item[$(P_2)$] Chain rule:
    \begin{align*}
        I(\rvx\rvy;\rvz) = I(\rvy;\rvz)+I(\rvx;\rvz|\rvy)
    \end{align*}
    \item[$(P_3)$] Chain rule (Multivariate Mutual Information):
    \begin{align*}
        I(\rvx;\rvy;\rvz) = I(\rvy;\rvz) - I(\rvy;\rvz|\rvx)
    \end{align*}
    \item[$(P_4)$] Positivity of discrete entropy:\\
        For discrete $\rvx$
        \begin{align*}
            H(\rvx)\ge 0, H(\rvx|\rvy)\ge 0
        \end{align*}
    \item[$(P_5)$] Entropy and Mutual Information
        \begin{align*}
            H(\rvx)= H(\rvx|\rvy)+ I(\rvx;\rvy)
        \end{align*}
\end{enumerate}

\section{Theorems and Proofs}
\label{app:proofs}

In the following section we prove the statements reported in the main text of the paper. Whenever a random variable $\rvz$ is defined to be a representation of another random variable $\rvx$, we state that $\rvz$ is conditionally independent from any other variable in the system once $\rvx$ is observed.
This does not imply that $\rvz$ must be a deterministic function of $\rvx$, but that the source of stochasticity for $\rvz$ is independent of the other random variables. As a result whenever $\rvz$ is a representation of $\rvx$:
\begin{align*}
    I(\rvz; \rva|\rvx\rvb) = 0,
\end{align*}
for any variable (or groups of variables) $\rva$ and $\rvb$ in the system.

\subsection{On Sufficiency}
\begin{proposition}
    \label{lemma:equivalence}
    Let $\rvx$ and $\rvy$ be random variables with joint distribution $p(\rvx,\rvy)$. Let $\rvz$ be a representation of $\rvx$, then $\rvz$ is sufficient for $\rvy$ if and only if $I(\rvx;\rvy)=I(\rvy;\rvz)$
    \\~\\Hypothesis:
        \begin{itemize}
            \item[$(H_1)$] $\rvz$ is a representation of $\rvx$: $I(\rvy;\rvz|\rvx)=0$
        \end{itemize}
    Thesis:
        \begin{itemize}
            \item[$(T_1)$] $I(\rvx;\rvy|\rvz)=0 \iff I(\rvx;\rvy)=I(\rvy;\rvz)$
        \end{itemize}
\end{proposition}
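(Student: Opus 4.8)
The plan is to prove the equivalence by establishing the single identity
\[
I(\rvx;\rvy|\rvz) = I(\rvx;\rvy) - I(\rvy;\rvz),
\]
from which $(T_1)$ follows at once. The identity itself comes from applying the chain rule $(P_2)$ to $I(\rvy;\rvx\rvz)$ in the two possible orders and then invoking the hypothesis $(H_1)$ to kill one term.

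Concretely, I would first expand, conditioning on $\rvz$ before $\rvx$,
\[
I(\rvy;\rvx\rvz) = I(\rvy;\rvz) + I(\rvy;\rvx|\rvz),
\]
and then expand the same quantity conditioning on $\rvx$ before $\rvz$,
\[
I(\rvy;\rvx\rvz) = I(\rvy;\rvx) + I(\rvy;\rvz|\rvx).
\]
By $(H_1)$ the term $I(\rvy;\rvz|\rvx)$ vanishes, so the second line reduces to $I(\rvy;\rvx\rvz) = I(\rvy;\rvx)$. Equating the two right-hand sides and rearranging yields the displayed identity above. The equivalence is then immediate: since the identity is an exact equality, $I(\rvx;\rvy|\rvz)=0$ holds precisely when $I(\rvx;\rvy)=I(\rvy;\rvz)$, which is $(T_1)$.

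I expect no real obstacle here; the only points requiring care are (i) applying the chain rule $(P_2)$ consistently in both orders to the joint variable $(\rvx,\rvz)$, and (ii) correctly reading off that "$\rvz$ is a representation of $\rvx$" means exactly $I(\rvy;\rvz|\rvx)=0$, which is how $(H_1)$ is phrased. One could alternatively route through the multivariate mutual information $I(\rvx;\rvy;\rvz)$ via $(P_3)$, but the two-way expansion of $I(\rvy;\rvx\rvz)$ is the most direct. Note that positivity $(P_1)$ is not needed for the equivalence itself, though it confirms that both sides of the identity are nonnegative.
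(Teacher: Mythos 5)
Your proof is correct and follows essentially the same route as the paper's: both derive the exact identity $I(\rvx;\rvy|\rvz) = I(\rvx;\rvy) - I(\rvy;\rvz)$ from the chain rule together with $(H_1)$, and then read off the equivalence directly (your two-way expansion of $I(\rvy;\rvx\rvz)$ via $(P_2)$ is just an equivalent bookkeeping of the paper's use of $(P_3)$ with the interaction term $I(\rvx;\rvy;\rvz)$). Your remark that positivity $(P_1)$ is not actually needed for the equivalence is also apt, since the identity alone suffices.
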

\begin{proof}~
            \begin{align*}
                I(\rvx;\rvy|\rvz) &\stackrel{(P_3)}{=} I(\rvx;\rvy) - I(\rvx;\rvy;\rvz) \stackrel{(P_3)}{=}  I(\rvx;\rvy) - I(\rvy;\rvz) - I(\rvy;\rvz|\rvx)\\
                &\stackrel{(H_1)}{=} I(\rvx;\rvy) - I(\rvy;\rvz)
            \end{align*}
            Since both $I(\rvx;\rvy)$ and $ I(\rvy;\rvz)$ are non-negative $(P_1)$, $I(\rvx;\rvy|\rvz)=0 \iff I(\rvy;\rvz)=I(\rvx;\rvy)$
\end{proof}

\subsection{No Free Generalization}
\begin{theorem}
    \label{th:no_free_compression}
    Let $\rvx$, $\rvz$ and $\rvy$ be random variables with joint distribution $p(\rvx,\rvy,\rvz)$. Let $\rvz'$ be a representation of $\rvx$ that satisfies $I(\rvx;\rvz)>I(\rvx;\rvz')$, then it is always possible to find a label $\rvy$ for which $\rvz'$ is not predictive for $\rvy$ while $\rvz$ is.\\~\\
    ~\\Hypothesis:
    \begin{itemize}
        \item[$(H_1)$] $\rvz'$ is a representation of $\rvx$: $I(\rvy;\rvz'|\rvx)=0$
        \item[$(H_2)$] $I(\rvx;\rvz)>I(\rvx;\rvz')$
    \end{itemize}
    Thesis:
    \begin{itemize}
        \item[$(T_1)$] $I(\rvx;\rvz')<I(\rvx;\rvz) \implies \exists \rvy. I(\rvy;\rvz)>I(\rvy;\rvz')=0$
    \end{itemize}
\end{theorem}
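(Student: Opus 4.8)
The plan is to manufacture a bad label $\rvy$ out of whatever $\rvz$ retains beyond $\rvz'$, and to verify $I(\rvy;\rvz')=0$ using nothing more than the defining property of a representation together with the chain rule.

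\textbf{What ``$\rvz'$ is a representation'' gives.} By the paper's convention, $\rvz'$ is conditionally independent of every other variable in the system once $\rvx$ is observed; in particular $I(\rvz;\rvz'\mid\rvx)=0$, and $I(\rvy;\rvz'\mid\rvx)=0$ (i.e.\ $(H_1)$) for any label $\rvy$ we adjoin. Expanding $I(\rvz;\rvx\rvz')$ with the chain rule $(P_2)$ in the two possible orders and cancelling the vanishing term $I(\rvz;\rvz'\mid\rvx)$ gives $I(\rvx;\rvz')=I(\rvz;\rvz')+I(\rvx;\rvz'\mid\rvz)$, hence by positivity $(P_1)$
\[
I(\rvz;\rvz') \le I(\rvx;\rvz').
\]
The same manoeuvre applied to $I(\rvy;\rvx\rvz')$ shows $I(\rvy;\rvz')\le I(\rvy;\rvx)$, so any label chosen with $I(\rvy;\rvx)=0$ automatically has $I(\rvy;\rvz')=0$.

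\textbf{The construction.} Taking $\rvy:=\rvz$ already settles a weak form of the claim: $(H_1)$ then holds for free because $\rvz'$ is a representation, and chaining the display above with hypothesis $(H_2)$ and the identity $H(\rvz)=H(\rvz\mid\rvx)+I(\rvx;\rvz)$ from $(P_5)$ gives
\[
I(\rvy;\rvz')=I(\rvz;\rvz') \le I(\rvx;\rvz') < I(\rvx;\rvz) \le H(\rvz)=I(\rvz;\rvz)=I(\rvy;\rvz),
\]
so $\rvz$ is strictly more predictive for $\rvy$ than $\rvz'$ is. To reach the exact statement $(T_1)$, namely $I(\rvy;\rvz)>I(\rvy;\rvz')=0$, I would instead take $\rvy$ to be a non-constant variable built solely from the randomness that $\rvz$ carries independently of $\rvx$: then $I(\rvy;\rvx)=0$, so $I(\rvy;\rvz')=0$ by the first step, while $I(\rvy;\rvz)=H(\rvy)>0$, and the Markov chain $\rvy-\rvx-\rvz'$ needed for $(H_1)$ holds since $I(\rvy;\rvx)=0$.

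\textbf{Main obstacle.} Everything is short mutual-information bookkeeping with $(P_1),(P_2),(P_3),(P_5)$ except this last step: one must argue that the strict information gap $I(\rvx;\rvz)>I(\rvx;\rvz')$ really does leave a handle --- a non-trivial component of $\rvz$ that is independent of $\rvx$ --- from which the label can be carved without it inheriting any dependence on $\rvx$, and hence on $\rvz'$. Locating that residual randomness of $\rvz$ and checking that the carved-out $\rvy$ is still correlated with $\rvz$ is the only place where the argument genuinely uses that $\rvz$ is the richer of the two variables; the rest follows from the Markov chain $\rvy-\rvx-\rvz'$ alone.
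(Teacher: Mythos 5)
Your bookkeeping steps are fine (both $I(\rvz;\rvz')\le I(\rvx;\rvz')$ and $I(\rvy;\rvz')\le I(\rvy;\rvx)$ are correct consequences of the representation property plus the chain rule), and the warm-up choice $\rvy:=\rvz$ is legitimate but only yields $I(\rvy;\rvz')<I(\rvy;\rvz)$, not the required $I(\rvy;\rvz')=0$, so it does not prove $(T_1)$. The genuine gap is in the step you yourself flag as the ``main obstacle'', and it cannot be closed in the direction you propose: the hypothesis $I(\rvx;\rvz)>I(\rvx;\rvz')$ gives no handle inside $\rvz$ that is independent of $\rvx$. The gap is a gap in information \emph{about $\rvx$}, and $\rvz$ may have no residual randomness at all --- e.g.\ $\rvz=\rvx$, which is exactly the case used in the paper's Corollary, or any deterministic encoder with $H(\rvz|\rvx)=0$. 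In that case every variable built from ``the randomness that $\rvz$ carries independently of $\rvx$'' is almost surely constant, so your candidate $\rvy$ has $H(\rvy)=0$ and the construction collapses; and even when such noise exists, a label made of it is precisely the kind of $\rvy$ for which the gap $I(\rvx;\rvz)>I(\rvx;\rvz')$ plays no role (and $I(\rvy;\rvz)=H(\rvy)$ is not automatic unless $\rvy$ is a function of $\rvz$). Incidentally, ``$I(\rvy;\rvx)=0$ implies the Markov chain $\rvy-\rvx-\rvz'$'' is not valid in general; for your construction it would instead follow from $I(\rvz;\rvz'|\rvx)=0$.

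The paper carves the label out of the \emph{other} variable: it invokes the factorization of Proposition 2.1 in \citet{Achille2018} to produce $\rvy$ with $I(\rvy;\rvz')=0$ and $\rvx=f(\rvz',\rvy)$ for a deterministic $f$, i.e.\ $\rvy$ is an independent residual of $\rvx$ relative to $\rvz'$. Then a chain-rule computation (using $I(\rvx;\rvz|\rvy\rvz')=0$, positivity, and $(H_1)$) gives
\begin{align*}
I(\rvy;\rvz)\;\ge\; I(\rvx;\rvz)-I(\rvx;\rvz')\;>\;0,
\end{align*}
so the constructed $\rvy$ satisfies $I(\rvy;\rvz)>I(\rvy;\rvz')=0$. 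If you want to salvage your write-up, replace the ``noise of $\rvz$'' construction by this factorization of $\rvx$ given $\rvz'$ and redo the final estimate along these lines; your preliminary inequalities can stay, but they are not the load-bearing part of the argument.
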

\begin{proof}
By construction.
    \begin{enumerate}
        \item We first factorize $\rvx$ as a function of two independent random variables (Proposition 2.1 \citet{Achille2018}) by picking $\rvy$ such that:
            \begin{itemize}
                \item[$(C_1)$] $I(\rvy;\rvz')=0$
                \item[$(C_2)$] $\rvx=f(\rvz',\rvy)$
            \end{itemize}
        for some deterministic function $f$. Note that such $\rvy$ always exists.
        \item Since $\rvx$ is a function of $\rvy$ and $\rvz'$:
            \begin{itemize}
                \item[$(C_4)$] $I(\rvx;\rvz|\rvy\rvz')=0$
            \end{itemize}
    \end{enumerate}
    Considering $ I(\rvy;\rvz)$:
    \begin{align*}
        I(\rvy;\rvz) &\stackrel{(P_3)}{=}I(\rvy;\rvz|\rvx) + I(\rvx;\rvy;\rvz)\\
        &\stackrel{(P_1)}{\ge} I(\rvx;\rvy;\rvz)\\
        &\stackrel{(P_3)}{=}I(\rvx;\rvz)-I(\rvx;\rvz|\rvy)\\
        &\stackrel{(P_3)}{=}I(\rvx;\rvz)-I(\rvx;\rvz|\rvy\rvz')-I(\rvx;\rvz;\rvz'|\rvy)\\
        &\stackrel{(C_2)}{=}I(\rvx;\rvz)-I(\rvx;\rvz;\rvz'|\rvy)\\
        &\stackrel{(P_3)}{=}I(\rvx;\rvz)-I(\rvx;\rvz'|\rvy)+I(\rvx;\rvz'|\rvy\rvz)\\
        &\stackrel{(P_1)}{\ge}I(\rvx;\rvz)-I(\rvx;\rvz'|\rvy)\\
        &\stackrel{(P_3)}{=}I(\rvx;\rvz)-I(\rvx;\rvz')+I(\rvx;\rvy;\rvz')\\
        &\stackrel{(P_3)}{=}I(\rvx;\rvz)-I(\rvx;\rvz')+I(\rvy;\rvz')-I(\rvy;\rvz'|\rvx)\\
        &\stackrel{(P_1)}{\ge}I(\rvx;\rvz)-I(\rvx;\rvz')-I(\rvy;\rvz'|\rvx)\\
        &\stackrel{(H_1)}{=}I(\rvx;\rvz)-I(\rvx;\rvz')\\
        &\stackrel{(H_2)}{>}0\\
    \end{align*}
    Since $I(\rvy;\rvz')=0$ by construction and $I(\rvy;\rvz)>0$, the $\rvy$ built in 1. satisfies the conditions reported in the thesis.
\end{proof}

\begin{corollary}
Let $\rvz'$ be a representation of $\rvx$ that discards observational information.
There is always a label $\rvy$ for which a $\rvz'$ is not predictive, while the original observations are.\\
Hypothesis:
\begin{itemize}
    \item[$(H_1)$] $\rvx$ is discrete
    \item[$(H_2)$] $\rvz'$ discards information regarding $\rvx$: $I(\rvz';\rvx)<H(\rvx)$
\end{itemize}
Thesis:
\begin{itemize}
    \item[$(T_1)$] $\exists \rvy. I(\rvy;\rvx)>I(\rvy;\rvz')=0$
\end{itemize}
\end{corollary}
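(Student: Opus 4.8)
The plan is to obtain this statement as an immediate consequence of Theorem~\ref{th:no_free_compression}, by taking the ``uncompressed'' representation $\rvz$ appearing in that theorem to be $\rvx$ itself. First I would record the elementary identity $I(\rvx;\rvx)=H(\rvx)$, which holds precisely because $\rvx$ is discrete: it follows from property $(P_5)$ together with $H(\rvx\mid\rvx)=0$. This is the only place where hypothesis $(H_1)$ of the corollary is actually used.

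Next I would verify that the hypotheses of Theorem~\ref{th:no_free_compression} are met under the substitution $\rvz := \rvx$. Hypothesis $(H_1)$ of the theorem requires $\rvz'$ to be a representation of $\rvx$, which is assumed in the corollary. Hypothesis $(H_2)$ of the theorem, $I(\rvx;\rvz)>I(\rvx;\rvz')$, becomes $H(\rvx)>I(\rvx;\rvz')$ once the identity from the previous step is applied, and this is exactly the corollary's hypothesis $(H_2)$ that $\rvz'$ discards observational information. The theorem imposes no structural constraint on $\rvz$ beyond the existence of a joint distribution $p(\rvx,\rvy,\rvz)$, so the (degenerate) choice $\rvz=\rvx$ is legitimate.

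Finally I would simply read off the conclusion: Theorem~\ref{th:no_free_compression} produces a label $\rvy$ with $I(\rvy;\rvz)>I(\rvy;\rvz')=0$, and substituting $\rvz=\rvx$ turns this into $I(\rvy;\rvx)>I(\rvy;\rvz')=0$, which is precisely the claimed thesis $(T_1)$.

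I do not expect a genuine obstacle here, since the argument is essentially a one-line instantiation of the preceding theorem. The only point requiring care is the translation step: discreteness of $\rvx$ is genuinely needed so that $I(\rvx;\rvx)=H(\rvx)$, which is what lets the corollary's hypothesis $(H_2)$ be matched to the theorem's hypothesis $(H_2)$. That matching is therefore the crux of the proof, modest as it is.
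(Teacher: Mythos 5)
Your proposal is correct and matches the paper's own proof, which likewise sets $\rvz = \rvx$, uses $I(\rvx;\rvx)=H(\rvx)$ (via $(P_5)$ with $H(\rvx|\rvx)=0$, where discreteness enters) to turn $(H_2)$ of the corollary into $(H_2)$ of Theorem~\ref{th:no_free_compression}, and then reads off the conclusion.
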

\begin{proof} By construction using Theorem~\ref{th:no_free_compression}.
\begin{enumerate}
    \item Set $\rvz = \rvx$:
        \begin{itemize}
            \item[$(C_1)$] $I(\rvx;\rvz)\stackrel{(P_5)}{=}H(\rvx)-H(\rvx|\rvz) \stackrel{(H_1)}{=}H(\rvx)$
        \end{itemize}
    \item $I(\rvz';\rvx)<H(\rvx) \stackrel{(C_1)}{\implies} I(\rvz';\rvx)<I(\rvx;\rvz)$
\end{enumerate}
Since the hypothesis are met, we conclude that there exist $\rvy$ such that $I(\rvy;\rvx)>I(\rvy;\rvz')=0$
\end{proof}

\subsection{Multi-View}

\subsubsection{Multi-View Redundancy and Sufficiency}
\begin{proposition}
    Let $\rvv_1$, $\rvv_2$, $\rvy$ be random variables with joint distribution $p(\rvv_1,\rvv_2,\rvy)$. Let $\rvz_1$ be a representation of $\rvv_1$, then:
        $$I(\rvv_1;\rvy|\rvz_1) \le I(\rvv_1;\rvv_2|\rvz_1) + I(\rvv_1;\rvy|\rvv_2)$$
    \label{th:multi_suff}
    ~\\Hypothesis:
    \begin{enumerate}
        \item [($H_1$)] $\rvz_1$ is a representation of $\rvv_1$: $I(\rvy;\rvz_1|\rvv_2\rvv_1)=0$
    \end{enumerate}
    Thesis:
    \begin{enumerate}
        \item [$(T_1)$] $I(\rvv_1;\rvy|\rvz_1) \le I(\rvv_1;\rvv_2|\rvz_1) + I(\rvv_1;\rvy|\rvv_2)$
    \end{enumerate}
\end{proposition}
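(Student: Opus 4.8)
The plan is to obtain $(T_1)$ from two applications of the chain rule $(P_2)$ (in its conditional form, as is already used elsewhere in the appendix) together with positivity $(P_1)$ and the single structural assumption $(H_1)$. The idea is: first split the $\rvv_2$-dependence out of $I(\rvv_1;\rvy|\rvz_1)$, producing the conditional term $I(\rvv_1;\rvy|\rvv_2\rvz_1)$; then show that conditioning additionally on the representation $\rvz_1$ can only decrease this term below $I(\rvv_1;\rvy|\rvv_2)$, which is where $(H_1)$ enters.

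For the first step I would expand $I(\rvv_1;\rvv_2\rvy|\rvz_1)$ in the two available ways. Grouping $\rvv_2$ before $\rvy$ gives
\begin{align*}
I(\rvv_1;\rvv_2\rvy|\rvz_1) = I(\rvv_1;\rvv_2|\rvz_1) + I(\rvv_1;\rvy|\rvv_2\rvz_1),
\end{align*}
and grouping $\rvy$ before $\rvv_2$ gives
\begin{align*}
I(\rvv_1;\rvv_2\rvy|\rvz_1) = I(\rvv_1;\rvy|\rvz_1) + I(\rvv_1;\rvv_2|\rvy\rvz_1) \ge I(\rvv_1;\rvy|\rvz_1),
\end{align*}
the last inequality being $(P_1)$. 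Comparing the two displays yields the intermediate bound $I(\rvv_1;\rvy|\rvz_1) \le I(\rvv_1;\rvv_2|\rvz_1) + I(\rvv_1;\rvy|\rvv_2\rvz_1)$.

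For the second step I would establish $I(\rvv_1;\rvy|\rvv_2\rvz_1) \le I(\rvv_1;\rvy|\rvv_2)$ by expanding $I(\rvv_1\rvz_1;\rvy|\rvv_2)$ two ways: peeling off $\rvz_1$ first gives $I(\rvv_1\rvz_1;\rvy|\rvv_2) = I(\rvz_1;\rvy|\rvv_2) + I(\rvv_1;\rvy|\rvz_1\rvv_2) \ge I(\rvv_1;\rvy|\rvz_1\rvv_2)$ by $(P_1)$, while peeling off $\rvv_1$ first gives $I(\rvv_1\rvz_1;\rvy|\rvv_2) = I(\rvv_1;\rvy|\rvv_2) + I(\rvz_1;\rvy|\rvv_1\rvv_2)$, and the last term is exactly $0$ by hypothesis $(H_1)$. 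Chaining these two facts gives $I(\rvv_1;\rvy|\rvv_2\rvz_1) \le I(\rvv_1;\rvy|\rvv_2)$, and substituting into the intermediate bound delivers $(T_1)$.

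I do not expect a real obstacle here; the whole argument is bookkeeping with $(P_1)$ and $(P_2)$. The one point to be careful about is invoking the representation property only in the precise form $I(\rvz_1;\rvy|\rvv_1\rvv_2)=0$ given by $(H_1)$ — i.e.\ using that $\rvz_1$ adds no information about $\rvy$ once $\rvv_1$ (and $\rvv_2$) are observed — rather than implicitly assuming a stronger Markov relation among the views.
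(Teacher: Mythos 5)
Your proof is correct and in substance matches the paper's: both arguments reduce the claim to the two facts $I(\rvv_1;\rvy|\rvz_1)\le I(\rvv_1;\rvv_2|\rvz_1)+I(\rvv_1;\rvy|\rvv_2\rvz_1)$ and $I(\rvv_1;\rvy|\rvv_2\rvz_1)\le I(\rvv_1;\rvy|\rvv_2)$, drop the same non-negative terms by $(P_1)$, and invoke $(H_1)$ at exactly the same point. The only difference is bookkeeping: you expand the joint quantities $I(\rvv_1;\rvv_2\rvy|\rvz_1)$ and $I(\rvv_1\rvz_1;\rvy|\rvv_2)$ two ways via the chain rule $(P_2)$, whereas the paper writes one long chain using the multivariate identity $(P_3)$, so the interaction-information terms appearing there simply never arise in your version.
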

\begin{proof}
Since $\rvz_1$ is a representation of $\rvv_1$:
\begin{itemize}
    \item[$(C_1)$] $I(\rvy;\rvz_1|\rvv_2\rvv_1)=0$
\end{itemize}
Therefore:
\begin{align*}
    I(\rvv_1;\rvy|\rvz_1) &\stackrel{(P_3)}{=} I(\rvv_1;\rvy|\rvz_1\rvv_2)+I(\rvv_1;\rvv_2;\rvy|\rvz_1)\\
    &\stackrel{(P_3)}{=} I(\rvv_1;\rvy|\rvv_2)-I(\rvv_1;\rvy;\rvz_1|\rvv_2)+I(\rvv_1;\rvv_2;\rvy|\rvz_1)\\
    &\stackrel{(P_3)}{=} I(\rvv_1;\rvy|\rvv_2)-I(\rvy;\rvz_1|\rvv_2)+ I(\rvy;\rvz_1|\rvv_2\rvv_1)+I(\rvv_1;\rvv_2;\rvy|\rvz_1)\\
    &\stackrel{(P_1)}{\le} I(\rvv_1;\rvy|\rvv_2)+ I(\rvy;\rvz_1|\rvv_2\rvv_1)+I(\rvv_1;\rvv_2;\rvy|\rvz_1)\\
    &\stackrel{(H_1)}{=} I(\rvv_1;\rvy|\rvv_2)+I(\rvv_1;\rvv_2;\rvy|\rvz_1)\\
    &\stackrel{(P_3)}{=}I(\rvv_1;\rvy|\rvv_2)+I(\rvv_1;\rvv_2|\rvz_1)-I(\rvv_1;\rvv_2|\rvz_1\rvy)\\
    &\stackrel{(P_1)}{\le} I(\rvv_1;\rvy|\rvv_2)+I(\rvv_1;\rvv_2|\rvz_1)
\end{align*}
\end{proof}

\begin{proposition}
    \label{lemma:multi_suff}
    Let $\rvv_1$ be a redundant view with respect to $\rvv_2$ for $\rvy$.
    Any representation $\rvz_1$ of $\rvv_1$ that is sufficient for $\rvv_2$ is also sufficient for $\rvy$.\\
    ~\\Hypothesis:
    \begin{enumerate}
        \item [($H_1$)] $\rvz_1$ is a representation of $\rvv_1$: $I(\rvy;\rvz_1|\rvv_2\rvv_1)=0$
        \item [$(H_2)$] $\rvv_1$ is redundant with respect to $\rvv_2$ for $\rvy$: $I(\rvy;\rvv_1|\rvv_2)=0$
    \end{enumerate}
    Thesis:
    \begin{enumerate}
        \item [$(T_1)$]  $I(\rvv_1;\rvv_2|\rvz_1)=0 \implies I(\rvv_1;\rvy|\rvz_1)=0$
    \end{enumerate}
\end{proposition}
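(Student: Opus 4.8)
The plan is to obtain this statement as an immediate consequence of the inequality already established in Proposition~\ref{th:multi_suff}, namely
\[
I(\rvv_1;\rvy|\rvz_1) \le I(\rvv_1;\rvv_2|\rvz_1) + I(\rvv_1;\rvy|\rvv_2).
\]
Observe first that the hypothesis $(H_1)$ here — that $\rvz_1$ is a representation of $\rvv_1$, i.e.\ $I(\rvy;\rvz_1|\rvv_2\rvv_1)=0$ — is exactly the hypothesis needed to invoke that proposition, so no extra work is required to apply it.

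Next I would substitute the two quantities that are assumed to vanish. The antecedent of the implication we are proving gives $I(\rvv_1;\rvv_2|\rvz_1)=0$, killing the first term on the right-hand side. The redundancy hypothesis $(H_2)$, $I(\rvy;\rvv_1|\rvv_2)=0$, together with the symmetry of mutual information in its first two arguments ($I(\rvv_1;\rvy|\rvv_2)=I(\rvy;\rvv_1|\rvv_2)$), kills the second term. Hence $I(\rvv_1;\rvy|\rvz_1)\le 0$.

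Finally I would close the argument using non-negativity of conditional mutual information $(P_1)$: since also $I(\rvv_1;\rvy|\rvz_1)\ge 0$, we conclude $I(\rvv_1;\rvy|\rvz_1)=0$, which by Definition~\ref{def:sufficiency} is precisely the assertion that $\rvz_1$ is sufficient for $\rvy$.

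I do not anticipate any genuine obstacle: all of the substantive work — the chain-rule manipulation bounding $I(\rvv_1;\rvy|\rvz_1)$ — is already contained in Proposition~\ref{th:multi_suff}, and the present statement is just its specialization obtained by setting the two slack terms to zero. (If a fully self-contained proof were preferred, one could inline the chain-rule computation from that proposition, but this would merely reproduce the earlier derivation verbatim.)
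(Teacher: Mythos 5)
Your proposal is correct and matches the paper's own proof: the paper likewise invokes the inequality $I(\rvv_1;\rvy|\rvz_1)\le I(\rvv_1;\rvv_2|\rvz_1)+I(\rvv_1;\rvy|\rvv_2)$ from Proposition~\ref{th:multi_suff}, uses $(H_2)$ to eliminate the redundancy term, and concludes from the sufficiency hypothesis together with non-negativity of conditional mutual information. No gaps; the specialization argument is exactly the intended one.
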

\begin{proof}
    Using the results from Theorem~\ref{th:multi_suff}:
    \begin{align*}
        I(\rvv_1;\rvy|\rvz_1)\stackrel{(Th_{\ref{th:multi_suff}})}{\le}  I(\rvv_1;\rvy|\rvv_2)+I(\rvv_1;\rvv_2|\rvz_1) \stackrel{(H_2)}{=}I (\rvv_1;\rvv_2|\rvz_1)
    \end{align*}
    Therefore $I(\rvv_1;\rvv_2|\rvz_1)=0\implies I(\rvv_1;\rvy|\rvz_1)=0$
\end{proof}

\begin{theorem}
    \label{cor:multi_ineq}
    Let $\rvv_1$, $\rvv_2$ and $\rvy$ be random variables with distribution $p(\rvv_1,\rvv_2,\rvy)$. Let $\rvz$ be a representation of $\rvv_1$, then
    $$I(\rvy;\rvz_1) \ge I(\rvy;\rvv_1\rvv_2) - I(\rvv_1;\rvv_2|\rvz_1) - I(\rvv_1;\rvy|\rvv_2) - I(\rvv_2;\rvy|\rvv_1)$$
    ~\\Hypothesis:
    \begin{enumerate}
        \item [($H_1$)] $\rvz_1$ is a representation of $\rvv_1$: $I(\rvy;\rvz_1|\rvv_1\rvv_2)=0$
    \end{enumerate}
    Thesis:
    \begin{enumerate}
        \item [$(T_1)$] $I(\rvy;\rvz_1) \ge I(\rvy;\rvv_1\rvv_2) - I(\rvv_1;\rvv_2|\rvz_1) - I(\rvv_1;\rvy|\rvv_2) - I(\rvv_2;\rvy|\rvv_1)$
    \end{enumerate}
\end{theorem}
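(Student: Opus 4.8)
The plan is to isolate $I(\rvy;\rvz_1)$ by inserting $\rvz_1$ into the joint information $I(\rvy;\rvv_1\rvv_2)$ via the chain rule, and then to control the resulting conditional term using Proposition~\ref{th:multi_suff}. Applying the chain rule $(P_2)$ to $I(\rvy;\rvv_1\rvv_2\rvz_1)$ in two ways gives
\begin{align*}
I(\rvy;\rvv_1\rvv_2) + I(\rvy;\rvz_1|\rvv_1\rvv_2) = I(\rvy;\rvv_1\rvv_2\rvz_1) = I(\rvy;\rvz_1) + I(\rvy;\rvv_1\rvv_2|\rvz_1),
\end{align*}
and since $(H_1)$ states $I(\rvy;\rvz_1|\rvv_1\rvv_2)=0$, this yields the exact identity $I(\rvy;\rvz_1) = I(\rvy;\rvv_1\rvv_2) - I(\rvy;\rvv_1\rvv_2|\rvz_1)$. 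Hence it suffices to prove the upper bound $I(\rvy;\rvv_1\rvv_2|\rvz_1) \le I(\rvv_1;\rvv_2|\rvz_1) + I(\rvv_1;\rvy|\rvv_2) + I(\rvv_2;\rvy|\rvv_1)$.

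For that bound I would split $I(\rvy;\rvv_1\rvv_2|\rvz_1)$ with the chain rule, conditioned throughout on $\rvz_1$, as $I(\rvy;\rvv_1|\rvz_1) + I(\rvy;\rvv_2|\rvz_1\rvv_1)$. The first summand is at most $I(\rvv_1;\rvv_2|\rvz_1) + I(\rvv_1;\rvy|\rvv_2)$ directly by Proposition~\ref{th:multi_suff}. For the second summand the goal is $I(\rvy;\rvv_2|\rvz_1\rvv_1) \le I(\rvv_2;\rvy|\rvv_1)$; this follows by expanding $I(\rvy;\rvv_2\rvz_1|\rvv_1)$ with the chain rule in two ways, namely $I(\rvy;\rvv_2|\rvv_1) + I(\rvy;\rvz_1|\rvv_1\rvv_2)$ on the one hand and $I(\rvy;\rvz_1|\rvv_1) + I(\rvy;\rvv_2|\rvv_1\rvz_1)$ on the other, then using $(H_1)$ to delete $I(\rvy;\rvz_1|\rvv_1\rvv_2)$ and positivity $(P_1)$ to drop the nonnegative term $I(\rvy;\rvz_1|\rvv_1)$. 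Chaining the two estimates gives the claimed bound on $I(\rvy;\rvv_1\rvv_2|\rvz_1)$, and combining it with the identity from the previous paragraph delivers the thesis $(T_1)$.

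The whole argument is essentially bookkeeping with the chain rules $(P_2)$, $(P_3)$ and positivity $(P_1)$, so there is no deep obstacle; the step to watch is $I(\rvy;\rvv_2|\rvz_1\rvv_1) \le I(\rvv_2;\rvy|\rvv_1)$, i.e. recognizing that conditioning on the representation $\rvz_1$ in addition to $\rvv_1$ cannot increase the residual information about $\rvv_2$ — this is precisely where hypothesis $(H_1)$ is used, and it deserves to be spelled out rather than left implicit. An alternative route bypasses this intermediate inequality: start from $I(\rvy;\rvv_1) = I(\rvy;\rvz_1) + I(\rvy;\rvv_1|\rvz_1)$ (using the representation property in the form $I(\rvy;\rvz_1|\rvv_1)=0$), bound $I(\rvy;\rvv_1|\rvz_1)$ by Proposition~\ref{th:multi_suff}, and substitute the chain-rule identity $I(\rvy;\rvv_1) = I(\rvy;\rvv_1\rvv_2) - I(\rvy;\rvv_2|\rvv_1)$; this reaches the same conclusion with one fewer appeal to positivity, at the cost of invoking the slightly stronger conditional-independence convention for representations.
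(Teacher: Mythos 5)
Your argument is correct and is essentially the paper's own proof: the same decomposition $I(\rvy;\rvz_1)=I(\rvy;\rvv_1\rvv_2)-I(\rvy;\rvv_1\rvv_2|\rvz_1)$ via $(H_1)$, the same split $I(\rvy;\rvv_1\rvv_2|\rvz_1)=I(\rvy;\rvv_1|\rvz_1)+I(\rvy;\rvv_2|\rvz_1\rvv_1)$, the same appeal to Proposition~\ref{th:multi_suff} for the first term, and the same use of $(H_1)$ plus positivity to bound $I(\rvy;\rvv_2|\rvz_1\rvv_1)\le I(\rvy;\rvv_2|\rvv_1)$ --- only phrased with two-way chain-rule expansions instead of the paper's multivariate-information notation $(P_3)$. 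The shortcut you sketch at the end is also valid, but as you note it relies on $I(\rvy;\rvz_1|\rvv_1)=0$, which follows from the paper's general representation convention rather than from the stated hypothesis $(H_1)$ alone.
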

\begin{proof}
    \begin{align*}
        I(\rvy;\rvz_1) &\stackrel{(P_3)}{=}I(\rvy;\rvz_1|\rvv_1\rvv_2)+I(\rvy;\rvv_1\rvv_2;\rvz_1)\\
        &\stackrel{(H_1)}{=}I(\rvy;\rvv_1\rvv_2;\rvz_1)\\
        &\stackrel{(P_3)}{=}I(\rvy;\rvv_1\rvv_2)-I(\rvy;\rvv_1\rvv_2|\rvz_1)\\
        &\stackrel{(P_2)}{=}I(\rvy;\rvv_1\rvv_2)-I(\rvy;\rvv_1|\rvz_1)-I(\rvy;\rvv_2|\rvz_1\rvv_1)\\
        &\stackrel{(P_3)}{=}I(\rvy;\rvv_1\rvv_2)-I(\rvy;\rvv_1|\rvz_1)-I(\rvy;\rvv_2|\rvv_1)+I(\rvy;\rvv_2;\rvz_1|\rvv_1)\\
        &\stackrel{(P_3)}{=}I(\rvy;\rvv_1\rvv_2)-I(\rvy;\rvv_1|\rvz_1)-I(\rvy;\rvv_2|\rvv_1)+I(\rvy;\rvz_1|\rvv_1)-I(\rvy;\rvz_1|\rvv_1\rvv_2)\\
        &\stackrel{(H_1)}{=}I(\rvy;\rvv_1\rvv_2)-I(\rvy;\rvv_1|\rvz_1)-I(\rvy;\rvv_2|\rvv_1)+I(\rvy;\rvz_1|\rvv_1)\\
        &\stackrel{(P_1)}{\ge}I(\rvy;\rvv_1\rvv_2)-I(\rvy;\rvv_1|\rvz_1)-I(\rvy;\rvv_2|\rvv_1)\\
        &\stackrel{(Prop_{\ref{th:multi_suff}})}{\ge}I(\rvy;\rvv_1\rvv_2)-I(\rvv_1;\rvy|\rvv_2)-I(\rvv_1;\rvv_2|\rvz_1)-I(\rvy;\rvv_2|\rvv_1)\\
    \end{align*}
\end{proof}

\begin{corollary}\label{cor:suff_multi}
    Let $\rvv_1$ and $\rvv_2$ be mutually redundant views for $\rvy$. Let $\rvz_1$ be a representation of $\rvv_1$ that is sufficient for $\rvv_2$. Then:
    $$I(\rvy;\rvz_1)=I(\rvv_1\rvv_2;\rvy)$$
    \\Hypothesis:
    \begin{itemize}
        \item[$(H_1)$] $\rvz_1$ is a representation of $\rvv_1$: $I(\rvy;\rvz_1|\rvv_1\rvv_2)=0$
        \item[$(H_2)$]$\rvv_1$ and $\rvv_2$ are mutually redundant for $\rvy$: $I(\rvy;\rvv_1|\rvv_2)+I(\rvy;\rvv_2|\rvv_1)=0$
        \item[$(H_3)$]$\rvz_1$ is sufficient for $\rvv_2$: $I(\rvv_2;\rvv_1|\rvz)=0$
    \end{itemize}
    Thesis:
    \begin{itemize}
        \item[$(T_1)$] $I(\rvy;\rvz_1)=I(\rvv_1\rvv_2;\rvy)$
    \end{itemize}
\end{corollary}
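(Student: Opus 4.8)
The statement should follow by sandwiching $I(\rvy;\rvz_1)$ between $I(\rvy;\rvv_1\rvv_2)$ from both sides. The plan is to establish an upper bound via a data-processing argument and a matching lower bound via Theorem~\ref{cor:multi_ineq}.

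First I would prove the upper bound $I(\rvy;\rvz_1)\le I(\rvy;\rvv_1\rvv_2)$. Since $\rvz_1$ is a representation of $\rvv_1$, hypothesis $(H_1)$ gives $I(\rvy;\rvz_1|\rvv_1)=0$ (this is the instance of the conditional-independence convention stated at the start of Appendix~\ref{app:proofs} with the appropriate grouping of variables, or it follows directly from $(H_1)$ by the chain rule after noting the missing term is nonnegative). Then by the chain rule $(P_2)$,
\begin{align*}
I(\rvy;\rvv_1)=I(\rvy;\rvv_1)+I(\rvy;\rvz_1|\rvv_1)=I(\rvy;\rvv_1\rvz_1)=I(\rvy;\rvz_1)+I(\rvy;\rvv_1|\rvz_1)\stackrel{(P_1)}{\ge}I(\rvy;\rvz_1),
\end{align*}
and again by $(P_2)$ and $(P_1)$, $I(\rvy;\rvv_1\rvv_2)=I(\rvy;\rvv_1)+I(\rvy;\rvv_2|\rvv_1)\ge I(\rvy;\rvv_1)\ge I(\rvy;\rvz_1)$.

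Next the lower bound. Hypothesis $(H_2)$ is a sum of two mutual information terms which are each nonnegative by $(P_1)$, so both $I(\rvy;\rvv_1|\rvv_2)=0$ and $I(\rvy;\rvv_2|\rvv_1)=0$. Together with $(H_3)$, i.e. $I(\rvv_1;\rvv_2|\rvz_1)=0$, all three correction terms on the right-hand side of Theorem~\ref{cor:multi_ineq} vanish, so that theorem yields $I(\rvy;\rvz_1)\ge I(\rvy;\rvv_1\rvv_2)$. Combining this with the upper bound gives $I(\rvy;\rvz_1)=I(\rvv_1\rvv_2;\rvy)$, which is $(T_1)$.

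I do not anticipate a real obstacle here: the corollary is essentially a bookkeeping consequence of Theorem~\ref{cor:multi_ineq} plus the data-processing inequality. The only point requiring a moment of care is the step extracting $I(\rvy;\rvv_1|\rvv_2)=0$ and $I(\rvy;\rvv_2|\rvv_1)=0$ individually from the single hypothesis $(H_2)$, which is justified by positivity $(P_1)$; everything else is a direct substitution into already-proven inequalities.
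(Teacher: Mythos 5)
Your proof is correct and follows essentially the same route as the paper: the lower bound is Theorem~\ref{cor:multi_ineq} with the three correction terms killed by $(H_2)$ (split via positivity) and $(H_3)$, and the upper bound is the data-processing inequality, which the paper simply invokes and you spell out via $I(\rvy;\rvz_1|\rvv_1)=0$. One small caveat: that Markov condition comes from the representation convention stated at the start of Appendix~\ref{app:proofs}, not from $(H_1)$ alone as your parenthetical suggests (the discrepancy $I(\rvy;\rvz_1|\rvv_1)-I(\rvy;\rvz_1|\rvv_1\rvv_2)$ is an interaction information and need not be nonnegative), but since your primary justification is the convention, nothing breaks.
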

\begin{proof}
    Using Theorem~\ref{cor:multi_ineq}
    \begin{align*}
        I(\rvy;\rvz_1) &\stackrel{(Th_{\ref{cor:multi_ineq}})}{\ge}I(\rvy;\rvv_1\rvv_2)-I(\rvv_1;\rvy|\rvv_2)-I(\rvv_1;\rvv_2|\rvz_1)-I(\rvy;\rvv_2|\rvv_1)\\
        &\stackrel{(H_2)}{=}I(\rvy;\rvv_1\rvv_2)-I(\rvv_1;\rvv_2|\rvz_1)\\
        &\stackrel{(H_3)}{=}I(\rvy;\rvv_1\rvv_2)\\
    \end{align*}
    Since $I(\rvy;\rvz_1)\le I(\rvy;\rvv_1\rvv_2)$ is a consequence of the data processing inequality, we conclude that $I(\rvy;\rvz_1)=I(\rvy;\rvv_1\rvv_2)$
\end{proof}

\subsection{Sufficiency and Augmentation}
\label{app:augmentation}
Let $\rvx$ and $\rvy$ be random variables with domain $\sX$ and $\sY$ respectively.
Let $\sT$ be a class of functions $t:\sX\to\sW$ and let $\rvt_1$ and $\rvt_2$ be a random variables over $\sT$ that depends only on $\rvx$.
For the theorems and corollaries discussed in this section, we are going to consider the independence assumption that can be derived from the graphical model $\gG$ reported in Figure~\ref{fig:graph_model}.
\begin{figure}[!ht]
    \centering
    \begin{tikzpicture}
    \node[latent] (x) {$\rvx$};%
    \node[latent, left=of x] (t1) {$\rvt_1$};%
    \node[latent, right=of x] (t2) {$\rvt_2$};%
    \node[latent, above=of x] (y) {$\rvy$};%
    \node[latent, below=of t1] (t1x) {$\rvt_1(\rvx)$};%
    \node[latent, below=of t2] (t2x) {$\rvt_2(\rvx)$};%
    \node[latent, right=of t1x] (z1) {$\rvz_1$};%
    \edge{x}{t1};
    \edge{x}{t2};
    \edge{x}{y};
    \edge{t1,x}{t1x};
    \edge{t2,x}{t2x};
    \edge{t1x}{z1};
\end{tikzpicture}
    \caption{Visualization of the graphical model $\gG$ that relates the observations $\rvx$, label $\rvy$, functions used for augmentation $\rvt_1$, $\rvt_2$ and the representation $\rvz_1$.}
    \label{fig:graph_model}
\end{figure}

\begin{proposition}
    \label{th:mutual_redundance_augmentation}
    Whenever $I(\rvt_1(\rvx);\rvy)=I(\rvt_2(\rvx);\rvy)=I(\rvx;\rvy)$ the two views $\rvt_1(\rvx)$ and $\rvt_2(\rvx)$ must be mutually redundant for $\rvy$.\\~\\
Hypothesis:
\begin{itemize}
    \item[$(H_1)$] Independence relations determined by $\gG$
\end{itemize}
Thesis:
\begin{itemize}
    \item[$(T_1)$] $I(\rvt_1(\rvx);\rvy)=I(\rvt_2(\rvx);\rvy)=I(\rvx;\rvy) \implies I(\rvt_1(\rvx);\rvy|\rvt_2(\rvx))+I(\rvt_2(\rvx);\rvy|\rvt_1(\rvx))=0$
\end{itemize}
\end{proposition}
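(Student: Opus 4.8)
Write $\rvv_1 := \rvt_1(\rvx)$ and $\rvv_2 := \rvt_2(\rvx)$. Since $I(\rvv_1;\rvy|\rvv_2)$ and $I(\rvv_2;\rvy|\rvv_1)$ are both non-negative by $(P_1)$, their sum is $0$ exactly when each term is $0$, so it suffices to prove $I(\rvv_1;\rvy|\rvv_2)=0$ and invoke symmetry. The plan is to extract from $\gG$ a single structural fact, convert the hypothesis into a statement involving only $\rvx$ and $\rvv_1$, and then combine the two through two applications of the chain rule.

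First I would read off the one consequence of $\gG$ that is needed: $\rvy$ is a leaf of $\gG$ whose unique parent is $\rvx$, so its Markov blanket is $\{\rvx\}$ and all of $\rvt_1,\rvt_2,\rvv_1,\rvv_2,\rvz_1$ are non-descendants of $\rvy$. Hence $\rvy$ is conditionally independent of any subcollection of these variables given $\rvx$, which yields in particular $I(\rvy;\rvv_1|\rvx)=0$, $I(\rvy;\rvv_2|\rvx)=0$, $I(\rvy;\rvv_1|\rvx\rvv_2)=0$ and $I(\rvy;\rvv_2|\rvx\rvv_1)=0$. Next I would exploit the hypothesis $I(\rvv_1;\rvy)=I(\rvx;\rvy)$: expanding $I(\rvx\rvv_1;\rvy)$ with the chain rule $(P_2)$ in the two possible orders gives $I(\rvx\rvv_1;\rvy)=I(\rvv_1;\rvy)+I(\rvx;\rvy|\rvv_1)$ and $I(\rvx\rvv_1;\rvy)=I(\rvx;\rvy)+I(\rvv_1;\rvy|\rvx)=I(\rvx;\rvy)$, so comparing with $I(\rvv_1;\rvy)=I(\rvx;\rvy)$ forces $I(\rvx;\rvy|\rvv_1)=0$. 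The same argument with $\rvv_2$ gives $I(\rvx;\rvy|\rvv_2)=0$.

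For the final step I would condition everything on $\rvv_2$ and apply $(P_2)$ in both orders:
\begin{align*}
I(\rvx\rvv_1;\rvy|\rvv_2) &= I(\rvx;\rvy|\rvv_2) + I(\rvv_1;\rvy|\rvx\rvv_2), \\
I(\rvx\rvv_1;\rvy|\rvv_2) &= I(\rvv_1;\rvy|\rvv_2) + I(\rvx;\rvy|\rvv_1\rvv_2).
\end{align*}
The right side of the first line vanishes, since $I(\rvx;\rvy|\rvv_2)=0$ by the previous paragraph and $I(\rvv_1;\rvy|\rvx\rvv_2)=0$ by the structural fact. Hence the two non-negative summands on the right of the second line add to $0$, giving $I(\rvv_1;\rvy|\rvv_2)=0$; the mirror-image computation gives $I(\rvv_2;\rvy|\rvv_1)=0$, and adding the two proves $(T_1)$.

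I do not expect a real obstacle: once the structural fact is in hand, the rest is bookkeeping with the chain rule $(P_2)$ and positivity $(P_1)$, in the same style as the other proofs of this appendix. The only point requiring care is the reading of the graphical model — one must be certain that conditioning on $\rvx$ eliminates all dependence of $\rvy$ on $\rvv_1$ and on $\rvv_2$, both on its own and after additionally conditioning on the other view. That is exactly the leaf/Markov-blanket observation above, and it is the sole place where the assumption that the two views are produced from $\rvx$ alone is used.
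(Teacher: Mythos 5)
Your proof is correct and takes essentially the same route as the paper's: both arguments rest on the graphical-model facts $I(\rvy;\rvt_1(\rvx)|\rvx)=I(\rvy;\rvt_2(\rvx)|\rvx)=0$ (and their conditional versions) combined with chain-rule and positivity bookkeeping to force $I(\rvt_1(\rvx);\rvy|\rvt_2(\rvx))=I(\rvt_2(\rvx);\rvy|\rvt_1(\rvx))=0$ from the hypothesis. Your organization --- first extracting $I(\rvx;\rvy|\rvt_1(\rvx))=I(\rvx;\rvy|\rvt_2(\rvx))=0$ and then expanding $I(\rvx\,\rvt_1(\rvx);\rvy|\rvt_2(\rvx))$ in two orders --- is simply a cleaner packaging of the paper's single long chain of (in)equalities, which bounds $I(\rvt_1(\rvx);\rvy|\rvt_2(\rvx))\le I(\rvy;\rvx)-I(\rvy;\rvt_2(\rvx))$ directly.
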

\begin{proof}~
    \begin{enumerate}
        \item Considering $\gG$ we have:
            \begin{itemize}
                \item[$(C_1)$] $I(\rvt_1(\rvx);\rvy|\rvx\rvt_2(\rvx))=0$
                \item[$(C_2)$] $I(\rvy;\rvt_2(\rvx)|\rvx)=0$
            \end{itemize}
        \item Since $\rvt_2(\rvx)$ is uniquely determined by $\rvx$ and $\rvt_2$:
        \begin{itemize}
            \item[$(C_3)$] $I(\rvt_2(\rvx);\rvy|\rvx\rvt_2)=0$
        \end{itemize}
        \item Consider $I(\rvt_1(\rvx);\rvy|\rvt_2(\rvx))$
            \begin{align*}
                I(\rvt_1(\rvx);\rvy|\rvt_2(\rvx)) &\stackrel{(P_3)}{=} I(\rvt_1(\rvx);\rvy|\rvx\rvt_2(\rvx)) + I(\rvt_1(\rvx);\rvy;\rvx|\rvt_2(\rvx))\\
                &\stackrel{(C_1)}{=} I(\rvt_1(\rvx);\rvy;\rvx|\rvt_2(\rvx))\\
                &\stackrel{(P_3)}{=} I(\rvy;\rvx|\rvt_2(\rvx))-I(\rvy;\rvx|\rvt_1(\rvx)\rvt_2(\rvx))\\
                &\stackrel{(P_1)}{\le} I(\rvy;\rvx|\rvt_2(\rvx))\\
                &\stackrel{(P_3)}{=} I(\rvy;\rvx)- I(\rvy;\rvx;\rvt_2(\rvx))\\
                &\stackrel{(P_3)}{=} I(\rvy;\rvx)- I(\rvy;\rvt_2(\rvx)) +   I(\rvy;\rvt_2(\rvx)|\rvx)\\
                &\stackrel{(P_3)}{=} I(\rvy;\rvx)- I(\rvy;\rvt_2(\rvx)) +   I(\rvy;\rvt_2(\rvx)|\rvt_2\rvx) + I(\rvy;\rvt_2(\rvx);\rvt_2|\rvx)\\
                &\stackrel{(C_3)}{=} I(\rvy;\rvx)- I(\rvy;\rvt_2(\rvx)) + I(\rvy;\rvt_2(\rvx);\rvt_2|\rvx)\\
                &\stackrel{(P_3)}{=} I(\rvy;\rvx)- I(\rvy;\rvt_2(\rvx)) + I(\rvy;\rvt_2(\rvx)|\rvx)-I(\rvy;\rvt_2(\rvx)|\rvt_2\rvx)\\
                &\stackrel{(P_1)}{\ge} I(\rvy;\rvx)- I(\rvy;\rvt_2(\rvx)) + I(\rvy;\rvt_2(\rvx)|\rvx)\\
                &\stackrel{(C_2)}{\ge} I(\rvy;\rvx)- I(\rvy;\rvt_2(\rvx))\\
            \end{align*}
            Therefore $I(\rvy;\rvx)= I(\rvy;\rvt_2(\rvx))\implies I(\rvt_1(\rvx);\rvy|\rvt_2(\rvx))=0$
    \end{enumerate}
    The proof for $I(\rvy;\rvx)= I(\rvy;\rvt_1(\rvx))\implies I(\rvt_2(\rvx);\rvy|\rvt_1(\rvx))=0$ is symmetric, therefore we conclude $I(\rvt_1(\rvx);\rvy)=I(\rvt_2(\rvx);\rvy)=I(\rvx;\rvy) \implies I(\rvt_1(\rvx);\rvy|\rvt_2(\rvx))+I(\rvt_2(\rvx);\rvy|\rvt_1(\rvx))=0$
\end{proof}

\begin{theorem}
    \label{th:suff_augmentation}
    Let $I(\rvt_1(\rvx);\rvy)=I(\rvt_2(\rvx);\rvy)=I(\rvx;\rvy)$. Let $\rvz_1$ be a representation of $\rvt_1(\rvx)$ . If $\rvz_1$ is sufficient for $\rvt_2(\rvx)$ then $I(\rvx;\rvy)=I(\rvy;\rvz_1)$.\\~\\
Hypothesis:
\begin{itemize}
    \item[$(H_1)$] Independence relations determined by $\gG$
    \item[$(H_2)$] $I(\rvt_1(\rvx);\rvy)=I(\rvt_2(\rvx);\rvy)=I(\rvx;\rvy)$
\end{itemize}
Thesis:
\begin{itemize}
    \item[$(T_1)$] $I(\rvt_1(\rvx);\rvt_2(\rvx)|\rvz_1)=0 \implies I(\rvx;\rvy)=I(\rvy;\rvz_1)$
\end{itemize}
\end{theorem}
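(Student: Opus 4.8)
The plan is to chain together the augmentation-redundancy result with the multi-view sufficiency corollary, so that almost no new computation is needed. First I would invoke Proposition~\ref{th:mutual_redundance_augmentation}: hypothesis $(H_2)$ states exactly $I(\rvt_1(\rvx);\rvy)=I(\rvt_2(\rvx);\rvy)=I(\rvx;\rvy)$, so the proposition yields that $\rvv_1:=\rvt_1(\rvx)$ and $\rvv_2:=\rvt_2(\rvx)$ are mutually redundant for $\rvy$, i.e. $I(\rvy;\rvv_1|\rvv_2)+I(\rvy;\rvv_2|\rvv_1)=0$; by positivity $(P_1)$ each of the two terms vanishes separately.

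Next I would verify the three hypotheses of Corollary~\ref{cor:suff_multi} for this choice of $\rvv_1,\rvv_2$ and this $\rvz_1$. Its $(H_1)$, namely $I(\rvy;\rvz_1|\rvv_1\rvv_2)=0$, follows from the fact that $\rvz_1$ is a representation of $\rvt_1(\rvx)$ in $\gG$ and hence, by the conditional-independence convention adopted for representations, is conditionally independent of any other variable in the system once $\rvt_1(\rvx)$ is observed; taking that other variable to be $\rvy$ and conditioning additionally on $\rvt_2(\rvx)$ gives precisely $I(\rvy;\rvz_1|\rvt_1(\rvx)\rvt_2(\rvx))=0$. Its $(H_2)$ is the mutual redundancy just obtained, and its $(H_3)$, $I(\rvv_2;\rvv_1|\rvz_1)=0$, is exactly the assumed sufficiency of $\rvz_1$ for $\rvt_2(\rvx)$. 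Corollary~\ref{cor:suff_multi} then gives $I(\rvy;\rvz_1)=I(\rvv_1\rvv_2;\rvy)$.

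Finally I would simplify the right-hand side: by the chain rule $(P_2)$, $I(\rvv_1\rvv_2;\rvy)=I(\rvv_2;\rvy)+I(\rvv_1;\rvy|\rvv_2)$, and since $I(\rvv_1;\rvy|\rvv_2)=0$ by the redundancy established in the first step, this equals $I(\rvv_2;\rvy)=I(\rvt_2(\rvx);\rvy)=I(\rvx;\rvy)$ using $(H_2)$ again. Combining the two equalities yields $I(\rvy;\rvz_1)=I(\rvx;\rvy)$, which is the thesis $(T_1)$.

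I do not expect a genuine obstacle here: every step is either a direct citation of an already-proved statement (Proposition~\ref{th:mutual_redundance_augmentation}, Corollary~\ref{cor:suff_multi}) or a one-line mutual-information identity. The only point deserving minor care is matching the graphical-model independence assumptions of $\gG$ to the exact hypothesis $(H_1)$ of Corollary~\ref{cor:suff_multi}, i.e. making sure that "$\rvz_1$ is a representation of $\rvt_1(\rvx)$" delivers $I(\rvy;\rvz_1|\rvt_1(\rvx)\rvt_2(\rvx))=0$ and not merely $I(\rvy;\rvz_1|\rvt_1(\rvx))=0$; this is immediate from the convention that the stochasticity of a representation is independent of all other variables given its argument.
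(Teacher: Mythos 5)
Your proof is correct, and it shares the paper's overall strategy: first invoke Proposition~\ref{th:mutual_redundance_augmentation} to turn hypothesis $(H_2)$ into mutual redundancy of $\rvt_1(\rvx)$ and $\rvt_2(\rvx)$ for $\rvy$, then hand off to the already-proved multi-view results. The difference is which result you hand off to. The paper uses the sufficiency-propagation statement (a representation of $\rvv_1$ that is sufficient for the redundant view $\rvv_2$ is sufficient for $\rvy$, Proposition~\ref{lemma:multi_suff}) together with Proposition~\ref{lemma:equivalence}, which lands directly on $I(\rvy;\rvz_1)=I(\rvy;\rvt_1(\rvx))$, equal to $I(\rvx;\rvy)$ by $(H_2)$ with nothing left to do. You instead cite Corollary~\ref{cor:suff_multi}, obtaining the slightly stronger intermediate fact $I(\rvy;\rvz_1)=I(\rvy;\rvt_1(\rvx)\rvt_2(\rvx))$ (that $\rvz_1$ captures all predictive information of the \emph{joint} views), and then need one extra chain-rule step, $I(\rvy;\rvv_1\rvv_2)=I(\rvy;\rvv_2)+I(\rvy;\rvv_1|\rvv_2)=I(\rvy;\rvv_2)=I(\rvx;\rvy)$, to collapse the joint information; this step is valid since the redundancy term vanishes by the first part of your argument. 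Both routes are short citation chains through the same appendix machinery, so the difference is cosmetic rather than substantive. Your explicit check that the representation convention yields $I(\rvy;\rvz_1|\rvt_1(\rvx)\rvt_2(\rvx))=0$, rather than only $I(\rvy;\rvz_1|\rvt_1(\rvx))=0$, is well placed and matches the standing assumption the paper states at the start of Appendix~\ref{app:proofs}.
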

\begin{proof}
    Since $\rvt_1(\rvx)$ is redundant for $\rvt_2(\rvx)$ (Proposition~\ref{th:mutual_redundance_augmentation}) any representation $\rvz_1$ of $\rvt_1(\rvx)$ that is sufficient for $\rvt_2(\rvx)$ must also be sufficient for $\rvy$ (Theorem~\ref{th:multi_suff}). Using Proposition~\ref{lemma:equivalence} we have $I(\rvy;\rvz_1)=I(\rvy;\rvt_1(\rvx))$.
    Since $I(\rvy;\rvt_1(\rvx))=I(\rvy;\rvx)$ by hypothesis, we conclude $I(\rvx;\rvy)=I(\rvy;\rvz_1)$
\end{proof}

\section{Information Plane}
\label{app:info_plane}
Every representation $\rvz$ of $\rvx$ must satisfy the following constraints:
\begin{itemize}
    \item $0\le I(\rvy;\rvz)\le I(\rvx;\rvy)$: The amount of label information ranges from 0 to the total predictive information accessible from the raw observations $I(\rvx;\rvy)$.
    \item $I(\rvy;\rvz) \le I(\rvx;\rvz)\le I(\rvy;\rvz)+H(\rvx|\rvy)$: 
    The representation must contain more information about the observations than about the label. When $\rvx$ is discrete, the amount of discarded label information $I(\rvx;\rvy)-I(\rvy;\rvz)$ must be smaller than the amount of discarded observational information $H(\rvx)-I(\rvx;\rvz)$, which implies $I(\rvx;\rvz)\le I(\rvy;\rvz)+H(\rvx|\rvy)$.
\end{itemize}
\begin{proof}
    Since $\rvz$ is a representation of $\rvx$:
    \begin{itemize}
        \item[$(C_{1})$]  $I(\rvy;\rvz|\rvx)=0$
    \end{itemize}
    Considering the four bounds separately:
    \begin{enumerate}
        \item $I(\rvy;\rvz)\ge 0$: Follows from $P_1$
        \item $I(\rvx;\rvz)\ge I(\rvy;\rvz)$: Follows from:
            \begin{align*}
                I(\rvx;\rvz) &\stackrel{(P_3)}{=}I(\rvx;\rvz|\rvy) + I(\rvx;\rvy;\rvz)\\
                &\stackrel{(P_1)}{\ge} I(\rvx;\rvy;\rvz)\\
                &\stackrel{(P_3)}{=} I(\rvy;\rvz)-I(\rvy;\rvz|\rvx)\\
                &\stackrel{(C_1)}{=} I(\rvy;\rvz)
            \end{align*}        
        \item $I(\rvy;\rvz)\le I(\rvy;\rvx)$: Data processing inequality
            \begin{align*}
                I(\rvy;\rvz) &\stackrel{(P_3)}{=} I(\rvy;\rvz|\rvx) + I(\rvy;\rvz;\rvx)\\
                &\stackrel{(C_1)}{=} I(\rvy;\rvz;\rvx)\\
                &\stackrel{(P_3)}{=} I(\rvx;\rvy) - I(\rvx;\rvy|\rvz)\\
                &\stackrel{(P_1)}{\le} I(\rvx;\rvy)
            \end{align*}
        \item $I(\rvx;\rvz)\le I(\rvy;\rvz)+H(\rvx|\rvy)$:
        For discrete $\rvx$:
            \begin{align*}
                I(\rvx;\rvz)  &\stackrel{(P_3)}{=} I(\rvx;\rvz|\rvy) + I(\rvx;\rvy;\rvz)\\
                &\stackrel{(P_3)}{=} I(\rvx;\rvz|\rvy) +I(\rvy;\rvz)-I(\rvy;\rvz|\rvx)\\
                &\stackrel{(C_1)}{=} I(\rvx;\rvz|\rvy) +I(\rvy;\rvz)\\
                &\stackrel{(P_4)}{\le}I(\rvx;\rvz|\rvy) + H(\rvx|\rvy\rvz) +I(\rvy;\rvz)\\
                &\stackrel{(P_5)}{=} H(\rvx|\rvy) + I(\rvy;\rvz)
            \end{align*}
    \end{enumerate} 
\end{proof}
Note that the discreetness of $\rvx$ is required only to prove bound $4$. For continuous $\rvx$ bounds $1$, $2$ and $3$ still hold.

\section{Non-transitivity of Mutual Redundancy}
\label{app:non-trans}
The mutual redundancy condition between two views $\rvv_1$ and $\rvv_2$ for a label $\rvy$ can not be trivially extended to an arbitrary number of views, as the relation is not transitive because of some higher order interaction between the different views and the label. This can be shown with a simple example.

Given three views $\rvv_1$, $\rvv_2$ and $\rvv_3$ and a task $\rvy$ such that:
\begin{itemize}
    \item $\rvv_1$ and $\rvv_2$ are mutually redundant for $\rvy$
    \item $\rvv_2$ and $\rvv_3$ are mutually redundant for $\rvy$
\end{itemize}
Then, we show that $\rvv_1$ is not necessarily mutually redundant with respect to $\rvv_3$ for $\rvy$.

Let $\rvv_1$, $\rvv_2$ and $\rvv_3$ be fair and independent binary random variables. Defining $\rvy$ as the exclusive or operator applied to $\rvv_1$ and $\rvv_3$ ( $\rvy := \rvv_1 \text{ XOR } \rvv_3$), we have that $I(\rvv_1;\rvy) = I(\rvv_3;\rvy) = 0$.
In this settings, $\rvv_1$ and $\rvv_2$ are mutually redundant for $\rvy$:
\begin{align*}
    I(\rvv_1;\rvy|\rvv_2) &=  H(\rvv_1|\rvv_2) - H(\rvv_1|\rvv_2\rvy) = H(\rvv_1) - H(\rvv_1) = 0\\
    I(\rvv_2;\rvy|\rvv_1) &=  H(\rvv_2|\rvv_1) - H(\rvv_2|\rvv_1\rvy) = H(\rvv_2) - H(\rvv_2) = 0
\end{align*}
Analogously, $\rvv_2$ and $\rvv_3$ are also mutually redundant for $\rvy$ as the three random variables are not predictive for each other.
Nevertheless, $\rvv_1$ and $\rvv_3$ are not mutually redundant for $\rvy$:
\begin{align*}
    I(\rvv_1;\rvy|\rvv_3) &=  H(\rvv_1|\rvv_3) - \underbrace{H(\rvv_1|\rvv_3\rvy)}_0 = H(\rvv_1) = 1 \\
    I(\rvv_3;\rvy|\rvv_1) &=  H(\rvv_3|\rvv_1) - \underbrace{H(\rvv_3|\rvv_1\rvy)}_0 = H(\rvv_3) = 1 
\end{align*}
Where $H(\rvv_1|\rvv_3\rvy)=H(\rvv_3|\rvv_1\rvy)=0$ follows from $\rvv_1 = \rvv_3 \text{ XOR } \rvy$ and $\rvv_3 = \rvv_1 \text{ XOR } \rvy$, while  $H(\rvv_1) = H(\rvv_3) = 1$ holds by construction.

This counter-intuitive higher order interaction between multiple views makes our theory non-trivial to generalize to more than two views.

\section{Equivalences of different objectives}
\label{app:equivalences}
Different objectives in literature can be seen as a special case of the Multi-View Information Bottleneck principle. In this section we show that the supervised version of Information Bottleneck is equivalent to the corresponding Multi-View version whenever the two redundant views have only label information in common.
A second subsection show equivalence between InfoMax and Multi-View Information Bottleneck whenever the two views are identical.

\subsection{Multi-View Information Bottleneck and Supervised Information Bottleneck}
Whenever the two mutually redundant views $\rvv_1$ and $\rvv_2$ have only label information in common (or when one of the two views is the label itself) the Multi-View Information Bottleneck objective is equivalent to the respective supervised version. This can be shown by proving that $I(\rvv_1;\rvz_1|\rvv_2)=I(\rvv_1;\rvz_1|\rvy)$, i.e. a representation $\rvz_1$ of $\rvv_1$ that is sufficient and minimal for $\rvv_2$ is also sufficient and minimal for $\rvy$.
\begin{proposition}
    \label{prop:mib_ib}
    Let $\rvv_1$ and $\rvv_2$ be mutually redundant views for a label $\rvy$ that share only label information. Then a sufficient representation $\rvz_1$ of $\rvv_1$ for $\rvv_2$ that is minimal for $\rvv_2$ is also a minimal representation for $\rvy$.
\\~\\Hypothesis:
\begin{itemize}
    \item[$(H_1)$] $\rvv_1$ and $\rvv_2$ are mutually redundant for $\rvy$: $I(\rvv_1;\rvy|\rvv_2)+I(\rvv_2;\rvy|\rvv_1) = 0$
    \item[$(H_2)$] $\rvv_1$ and $\rvv_2$ share only label information: $I(\rvv_1;\rvv_2)=I(\rvv_1;\rvy)$
    \item[$(H_3)$] $\rvz_1$ is sufficient for $\rvv_2$: $I(\rvv_1;\rvv_2|\rvz_1)=0$
\end{itemize}
Thesis:
\begin{itemize}
    \item[$(T_1)$] $I(\rvv_1;\rvz_1|\rvv_2)=I(\rvv_1;\rvz_1|\rvy)$
\end{itemize}
\end{proposition}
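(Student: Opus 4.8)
The plan is to reduce the claimed identity $I(\rvv_1;\rvz_1|\rvv_2)=I(\rvv_1;\rvz_1|\rvy)$ to the single statement $I(\rvz_1;\rvv_2)=I(\rvz_1;\rvy)$, and then to prove that by two positivity arguments. First I would apply the interaction-information chain rule $(P_3)$ to each side, writing $I(\rvv_1;\rvz_1|\rvv_2)=I(\rvv_1;\rvz_1)-I(\rvv_1;\rvv_2;\rvz_1)$ and $I(\rvv_1;\rvz_1|\rvy)=I(\rvv_1;\rvz_1)-I(\rvv_1;\rvy;\rvz_1)$, so that it suffices to show $I(\rvv_1;\rvv_2;\rvz_1)=I(\rvv_1;\rvy;\rvz_1)$. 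Applying $(P_3)$ once more to each triple term and using that $\rvz_1$ is a representation of $\rvv_1$ — so every quantity of the form $I(\rvz_1;\rva|\rvv_1\rvb)$ vanishes, in particular $I(\rvz_1;\rvv_2|\rvv_1)=0$ and $I(\rvz_1;\rvy|\rvv_1)=0$ — these collapse to $I(\rvz_1;\rvv_2)$ and $I(\rvz_1;\rvy)$ respectively, so the whole proposition comes down to proving $I(\rvz_1;\rvv_2)=I(\rvz_1;\rvy)$.

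The key intermediate fact is that conditioning on $\rvy$ decouples the two views, i.e. $I(\rvv_1;\rvv_2|\rvy)=0$. I would obtain this from $I(\rvv_1;\rvv_2)=I(\rvv_1;\rvv_2|\rvy)+I(\rvv_1;\rvv_2;\rvy)$ together with $I(\rvv_1;\rvv_2;\rvy)=I(\rvv_1;\rvy)-I(\rvv_1;\rvy|\rvv_2)=I(\rvv_1;\rvy)$, where the last equality uses mutual redundancy $(H_1)$; substituting hypothesis $(H_2)$, $I(\rvv_1;\rvv_2)=I(\rvv_1;\rvy)$, forces $I(\rvv_1;\rvv_2|\rvy)=0$.

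With this in hand I would show $I(\rvz_1;\rvv_2|\rvy)=0$ and $I(\rvz_1;\rvy|\rvv_2)=0$. For the first, expand $I(\rvz_1;\rvv_2|\rvy)=I(\rvz_1;\rvv_2|\rvv_1\rvy)+I(\rvv_1;\rvv_2;\rvz_1|\rvy)$; the first term vanishes by the representation property, and the second equals $I(\rvv_1;\rvv_2|\rvy)-I(\rvv_1;\rvv_2|\rvz_1\rvy)=-I(\rvv_1;\rvv_2|\rvz_1\rvy)\le 0$ by the previous step and $(P_1)$, so since $I(\rvz_1;\rvv_2|\rvy)\ge 0$ it must be zero. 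The second identity is symmetric, this time invoking $I(\rvv_1;\rvy|\rvv_2)=0$ from $(H_1)$ in place of $I(\rvv_1;\rvv_2|\rvy)=0$. Finally, decomposing $I(\rvz_1;\rvv_2)=I(\rvz_1;\rvv_2|\rvy)+I(\rvz_1;\rvv_2;\rvy)$ and $I(\rvz_1;\rvy)=I(\rvz_1;\rvy|\rvv_2)+I(\rvz_1;\rvv_2;\rvy)$, both reduce to the common symmetric term $I(\rvz_1;\rvv_2;\rvy)$, which gives $I(\rvz_1;\rvv_2)=I(\rvz_1;\rvy)$ and hence the thesis.

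I expect the only real difficulty to be bookkeeping: keeping the interaction-information expansions internally consistent and checking that each use of the representation property $I(\rvz_1;\rva\,|\,\rvv_1\rvb)=0$ is applied to a legitimately conditioned quantity. It is worth remarking that hypothesis $(H_3)$ (sufficiency of $\rvz_1$ for $\rvv_2$) is not needed for this particular identity; it is what upgrades the conclusion to minimality \emph{and} sufficiency for $\rvy$, since by Proposition~\ref{lemma:multi_suff} a representation sufficient for $\rvv_2$ is also sufficient for $\rvy$.
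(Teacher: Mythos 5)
Your proof is correct, but it takes a genuinely different route from the paper's. The paper expands $I(\rvv_1;\rvz_1)$ twice via the chain rule --- once against $\rvv_2$ and once against $\rvy$ --- and then uses $(H_3)$ to kill $I(\rvv_1;\rvv_2|\rvz_1)$ and Corollary~\ref{cor:multiview-sufficiency} (which itself needs $(H_3)$) to kill $I(\rvv_1;\rvy|\rvz_1)$, so that equating the two expansions together with $(H_2)$ gives the thesis. You instead reduce the thesis to $I(\rvz_1;\rvv_2)=I(\rvz_1;\rvy)$ via the representation property, derive the conditional independence $I(\rvv_1;\rvv_2|\rvy)=0$ from $(H_1)$ and $(H_2)$, and then obtain $I(\rvz_1;\rvv_2|\rvy)=0$ and $I(\rvz_1;\rvy|\rvv_2)=0$ by positivity, finishing through the symmetry of the interaction term $I(\rvz_1;\rvv_2;\rvy)$; every step checks out, including the repeated uses of $I(\rvz_1;\cdot\,|\,\rvv_1\,\cdot)=0$, which are covered by the paper's blanket convention on representations. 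What each approach buys: your argument is slightly stronger, since it establishes $I(\rvv_1;\rvz_1|\rvv_2)=I(\rvv_1;\rvz_1|\rvy)$ for \emph{every} representation $\rvz_1$ of $\rvv_1$ (your observation that $(H_3)$ is not needed for the identity is correct --- in the paper's expansion the two residual terms $I(\rvv_1;\rvv_2|\rvz_1)$ and $I(\rvv_1;\rvy|\rvz_1)$ cancel even without sufficiency, precisely because $I(\rvz_1;\rvv_2)=I(\rvz_1;\rvy)$), and it makes explicit the structural fact $I(\rvv_1;\rvv_2|\rvy)=0$ underlying the ``share only label information'' hypothesis; the paper's proof is shorter because it leans on the already-proven Corollary~\ref{cor:multiview-sufficiency}, and $(H_3)$ is of course still needed for the interpretive conclusion that the minimal sufficient representation for $\rvv_2$ is sufficient (not merely minimal) for $\rvy$, which you correctly note.
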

\begin{proof}~
    \begin{enumerate}
        \item Consider $I(\rvv_1;\rvz)$:
            \begin{align*}
                I(\rvv_1;\rvz_1) &\stackrel{(P_3)}{=} I(\rvv_1;\rvz_1|\rvv_2) + I(\rvv_1;\rvv_2;\rvz_1)\\
                &\stackrel{(P_3)}{=} I(\rvv_1;\rvz_1|\rvv_2) + I(\rvv_1;\rvv_2)-I(\rvv_1;\rvv_2|\rvz_1)\\
                &\stackrel{(H_3)}{=} I(\rvv_1;\rvz_1|\rvv_2) + I(\rvv_1;\rvv_2)\\
                &\stackrel{(H_1)}{=} I(\rvv_1;\rvz_1|\rvv_2) + I(\rvv_1;\rvy)\\
            \end{align*}
        \item Using Corollary~\ref{cor:multiview-sufficiency}, from $(H_2)$ and $(H_3)$ follows $I(\rvv_1;\rvy|\rvz_1)=0$
        \item $I(\rvv_1;\rvz)$ can be alternatively expressed as:
         \begin{align*}
                I(\rvv_1;\rvz_1) &\stackrel{(P_3)}{=} I(\rvv_1;\rvz_1|\rvy) + I(\rvv_1;\rvy;\rvz_1)\\
                &\stackrel{(P_3)}{=} I(\rvv_1;\rvz_1|\rvy) + I(\rvv_1;\rvy)-I(\rvv_1;\rvy|\rvz_1)\\
                &\stackrel{(Cor_{\ref{cor:multiview-sufficiency}})}{=} I(\rvv_1;\rvz_1|\rvy) + I(\rvv_1;\rvy)\\
            \end{align*}
    \end{enumerate}
    Equating 1 and 3, we conclude $I(\rvv_1;\rvz_1|\rvv_2)=I(\rvv_1;\rvz_1|\rvy)$, therefore $\rvz_1$ which minimizes $I(\rvv_1;\rvz_1|\rvv_2)$ is also minimizing $I(\rvv_1;\rvz_1|\rvy)$. When $I(\rvv_1;\rvz_1|\rvy)$ is minimal, $I(\rvy;\rvz_1)$ is also minimal (see equation~\ref{eq:decomposition}).
\end{proof}

\subsection{Multi-View Information Bottleneck and Infomax}
Whenever $\rvv_1=\rvv_2$, a representation $\rvz_1$ of $\rvv_1$ that is sufficient for $\rvv_2$ must contain all the original information regarding $\rvv_1$. Furthermore since $I(\rvv_1;\rvz_1|\rvv_2)=0$ for every representation, no superfluous information can be identified and removed.
As a consequence, a minimal sufficient representation $\rvz_1$ of $\rvv_1$ for $\rvv_2$ is any representation for which mutual information is maximal, hence InfoMax.

\section{Loss Computation}
\label{app:loss}
Starting from Equation~\ref{eq:original_loss}, we consider the average of the losses $\gL_1(\theta;\lambda_1)$ and $\gL_2(\psi;\lambda_2)$ that aim to create the minimal sufficient representations $\rvz_1$ and $\rvz_2$ respectively:
\begin{align}
    \gL_{\frac{1+2}{2}}(\theta,\psi;\lambda_1,\lambda_2) = \frac{I_\theta(\rvv_1;\rvz_1|\rvv_2) + I_\psi(\rvv_2;\rvz_2|\rvv_1)}{2} +\frac{\lambda_1 I_\theta(\rvv_1;\rvv_2|\rvz_1)+\lambda_2 I_\psi(\rvv_1;\rvv_2|\rvz_1)}{2}
    \label{eq:loss_computation_1}
\end{align}
Considering $\rvz_1$ and $\rvz_2$ on the same domain $\sZ$, $I_\theta(\rvv_1;\rvz_1|\rvv_2)$ can be expressed as:
\begin{align*}
    I_\theta(\rvv_1;\rvz_1|\rvv_2) &= \E_{\vv_1,\vv_2\sim p(\rvv_1,\rvv_2)}\E_{\vz\sim p_\theta(\rvz_1|\rvv_1)}\left[ \log\frac{p_\theta(\rvz_1=\vz|\rvv_1=\vv_1)}{p_\theta(\rvz_1=\vz|\rvv_2=\vv_2)}\right]\\
    &=\E_{\vv_1,\vv_2\sim p(\rvv_1,\rvv_2)}\E_{\vz\sim p_\theta(\rvz_1|\rvv_1)}\left[ \log\frac{p_\theta(\rvz_1=\vz|\rvv_1=\vv_1)}{p_\psi(\rvz_2=\vz|\rvv_2=\vv_2)}\frac{p_\psi(\rvz_2=\vz|\rvv_2=\vv_2)}{p_\theta(\rvz_1=\vz|\rvv_2=\vv_2)}\right]\\
    &=\KL(p_\theta(\rvz_1|\rvv_1)||p_\psi(\rvz_2|\rvv_2)) - \KL(p_\theta(\rvz_2|\rvv_1)||p_\psi(\rvz_2|\rvv_2))\\
    &\le \KL(p_\theta(\rvz_1|\rvv_1)||p_\psi(\rvz_2|\rvv_2))
\end{align*}
Note that the bound is tight whenever $p_\psi(\rvz_2|\rvv_2)$ coincides with $p_\theta(\rvz_1|\rvv_2)$. This happens whenever $\rvz_1$ and $\rvz_2$ produce a consistent encoding.
Analogously $I_\psi(\rvv_2;\rvz_2|\rvv_1)$ is upper bounded by $\KL(p_\psi(\rvz_2|\rvv_2)||p_\theta(\rvz_1|\rvv_1))$.

$I_\theta(\rvv_2;\rvz_1)$ can be rephrased as:
\begin{align*}
    I_\theta(\rvz_1;\rvv_2) &\stackrel{(P_2)}{=}  I_{\theta\psi}(\rvz_1;\rvz_2\rvv_2)-I_{\theta\psi}(\rvz_1;\rvz_2|\rvv_2) \\
    &=^* I_{\theta\psi}(\rvz_1;\rvz_2\rvv_2)\\
    &=  I_{\theta\psi}(\rvz_1;\rvz_2)+ I_{\theta\psi}(\rvz_1;\rvv_2|\rvz_2)\\
    &\ge I_{\theta\psi}(\rvz_1;\rvz_2)
\end{align*}
Where $^*$ follows from $\rvz_2$ representation of $\rvv_2$.
The bound reported in this equation is tight whenever $\rvz_2$ is sufficient for $\rvz_1$ ($I_{\theta\psi}(\rvz_1;\rvv_2|\rvz_2)=0$). This happens whenever $\rvz_2$ contains all the information regarding $\rvz_1$ (and therefore $\rvv_1$).
Once again, the same bound can symmetrically be used to show $I_\theta(\rvz_2;\rvv_1)\ge I_{\theta\psi}(\rvz_1;\rvz_2)$.
Therefore, the loss function in Equation~\ref{eq:loss_computation_1} can be upper-bounded with:
\begin{align}
    \gL_{\frac{1+2}{2}}(\theta,\psi;\lambda_1,\lambda_2) &\le D_{SKL}(p_\theta(\rvz_1|\rvv_1)||p_\psi(\rvz_2|\rvv_2)) -\frac{\lambda_1 + \lambda_2}{2} I_{\theta\psi}(\rvz_1;\rvz_2)
    \label{eq:loss_computation_2}
\end{align}
Where:
\begin{align*}
    D_{SKL}(p_\theta(\rvz_1|\rvv_1)||p_\psi(\rvz_2|\rvv_2)):= \frac{1}{2} \KL(p_\theta(\rvz_1|\rvv_1)||p_\psi(\rvz_2|\rvv_2)) +\frac{1}{2} \KL(p_\psi(\rvz_2|\rvv_2)||p_\theta(\rvz_1|\rvv_1))
\end{align*}
Lastly, multiplying both terms with $\beta:=\frac{2}{\lambda_1+\lambda_2}$ and re-parametrizing the objective, we obtain:
\begin{align}
    \gL_{MIB}(\theta,\psi; \beta) &= -  I_{\theta\psi}(\rvz_1;\rvz_2) + \beta\ D_{SKL}(p_\theta(\rvz_1|\rvv_1)||p_\psi(\rvz_2|\rvv_2))
\end{align}

\section{Experimental procedure and details}
\label{app:experiments}

\subsection{Modeling}
The two stochastic encoders $p_\theta(\rvz_1|\rvv_1)$ and $p_\psi(\rvz_2|\rvv_2)$ are modeled by Normal distributions parametrized with neural networks $(\boldsymbol{\mu}_\theta,\boldsymbol{\sigma}_\theta^2)$ and $(\boldsymbol{\mu}_\psi,\boldsymbol{\sigma}_\psi^2)$ respectively:
\begin{align*}
    p_\theta(\rvz_1|\rvv_1) &:= \mathcal{N}\left(\rvz_1| \boldsymbol{\mu}_\theta(\rvv_1), \boldsymbol{\sigma}_\theta^2(\rvv_1)\right)\\
    p_\psi(\rvz_2|\rvv_2) &:= \mathcal{N}\left(\rvz_2| \boldsymbol{\mu}_\psi(\rvv_2), \boldsymbol{\sigma}_\psi^2(\rvv_2)\right)
\end{align*}
Since the density of the two encoders can be evaluated, the symmetrized KL-divergence in equation~\ref{eq:symm_loss} can be directly computed.
On the other hand, $I_{\theta\psi}(\rvz_1;\rvz_2)$ requires the use of a mutual information estimator.

To facilitate the optimization, the hyper-parameter $\beta$ is slowly increased during training, starting from a small value $\approx10^{-4}$ to its final value with an exponential schedule. This is because the mutual information estimator is trained together with the other architectures and, since it starts from a random initialization, it requires an initial warm-up.
Starting with bigger $\beta$ results in the encoder collapsing into a fixed representation. The update policy for the hyper-parameter during training has not shown strong influence on the representation, as long as the mutual information estimator network has reached full capacity.

All the experiments have been performed using the Adam optimizer with a learning rate of $10^{-4}$ for both encoders and the estimation network. Higher learning rate can result in instabilities in the training procedure. The results reported in the main text relied on the Jensen-Shannon mutual information estimator \citep{Devon2018} since the InfoNCE counterpart \citep{Oord2018} generally resulted in worse performance that could be explained by the effect of the factorization of the critic network \citep{Poole2019}.

\subsection{Sketchy Experiments}
\begin{itemize}
    \item \textbf{Input:} The two views for the sketch-based classification task consist of 4096 dimensional sketch and image features extracted from two distinct VGG-16 network models which were pre-trained on images and sketches from the TU-Berlin dataset~\cite{Eitz2012} for end-to-end classification. The feature extractors are frozen during the training procedure of for the two representations. Each training iteration used batches of size $B=128$.
    
    \item \textbf{Encoder and Critic architectures:} Both sketch and image encoders consist of multi-layer perceptrons of 2 hidden ReLU units of size 2,048 and 1,024 respectively with an output of size 2x64 that parametrizes mean and variance for the two Gaussian posteriors. The critic architecture also consists of a multi layer perceptron of 2 hidden ReLU units of size 512.
    \item \textbf{$\beta$ update policy:} The initial value of $\beta$ is set to $10^{-4}$. Starting from the 10,000$^\text{th}$ training iteration, the value of $\beta$ is exponentially increased up to 1.0 during the following 250,000 training iterations. The value of $\beta$ is then kept fixed to one until the end of the training procedure (500,000 iterations).
    
    \item \textbf{Evaluation:}
All natural images are used as both training sets and retrieval galleries. The 64 dimensional real outputs of sketch and image representation are compared using Euclidean distance. For having a fair comparison other methods that rely on binary hashing \citep{Liu2017,Zhang2018}, we used Hamming distance on a binarized representation (obtained by applying iterative quantization \cite{Gong2013ITQ} on our real valued representation). We report the mean average precision (mAP@all) and precision at top-rank 200 (Prec@200) \cite{Su2015PerfEvalIR} on both the real and binary representation to evaluate our method and compare it with prior works.
\end{itemize}

\subsection{MIR-Flickr Experiments}
\label{app:flickr}
\begin{figure}[!ht]
    \centering
    \caption{Examples of pictures $\rvv_1$, tags $\rvv_2$ and category labels $\rvy$ for the MIR-Flickr dataset \citep{Srivastava2014}. As visualized is the second row, the tags are not always predictive of the label. For this reason, the mutual redundancy assumption holds only approximately.}
    \begin{minipage}{0.5\textwidth}
            \begin{tabular}{c c c}
         $\rvv_1\in \sR^{3857}$ & $ \rvv_2\in\left\{0,1\right\}^{2000}$ & $\rvy\in\left\{0,1\right\}^{38}$  \\\hline
         \parbox{0.25\textwidth}{\vspace{0.1cm} \includegraphics[width=0.24\textwidth, bb=0 0 256 256]{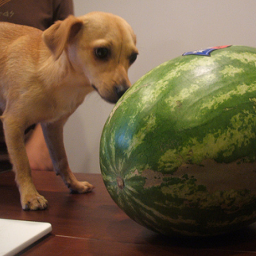}\vspace{0.1cm}} & \parbox{0.25\textwidth}{\centering\small ``watermelon'', ``hilarious'', ``chihuahua'', ``dog''} & \parbox{0.25\textwidth}{\centering\small
         ``animals'', ``dog'', ``food''}\\
         \parbox{0.25\textwidth}{\vspace{0.1cm}\includegraphics[width=0.24\textwidth, bb=0 0 256 256]{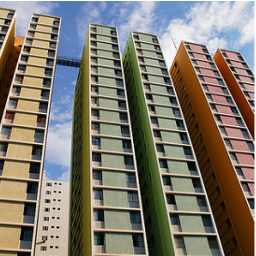}\vspace{0.1cm}} & \parbox{0.25\textwidth}{\centering\small ``colors'', ``cores'', ``centro'', ``comercial'', ``building''} & \parbox{0.25\textwidth}{\centering\small
         ``clouds'', ``sky'', ``structures''}
    \end{tabular}
    \end{minipage}
    \label{fig:flickr_example}
\end{figure}
\begin{itemize}
    \item \textbf{Input:} 
    Whitening is applied to the handcrafted image features. Batches of size $B=128$ are used for each update step.
    
    \item \textbf{Encoders and Critic architectures:} The two encoders consists of a multi layer perceptron of 4 hidden ReLU units of size 1,024, which exactly resemble the architecture used in \citet{Wang2016}. Both representations $\rvz_1$ and $\rvz_2$ have a size of 1,024, therefore the two architecture output a total of 2x1,024 parameters that define mean and variance of the respective factorized Gaussian posterior. Similarly to the Sketchy experiments, the critic is consists of a multi-layer perceptron of 2 hidden ReLU units of size 512.
    
    \item \textbf{$\beta$ update policy:} The initial value of $\beta$ is set to $10^{-8}$. Starting from 150000$^\text{th}$ iteration, $\beta$ is set to exponentially increase up to 1.0 (and $10^{-3}$) during the following 150,000 iterations.
    \item \textbf{Evaluation:}  Once the models are trained on the \emph{unlabeled} set, the representation of the 25,000 \emph{labeled} images is computed. The resulting vectors are used for training and evaluating a multi-label logistic regression classifier on the respective splits. The optimal parameters (such as $\beta$) for our model are chosen based on the performance on the validation set. In Table \ref{tab:flickr}, we report the aggregated mean of the 5 test splits as the final value mean average precision value.
\end{itemize}

\subsection{MNIST Experiments}
\begin{itemize}
    \item \textbf{Input:} The two views $\rvv_1$ and $\rvv_2$ for the MNIST dataset are generated by applying small translation ([0-10]\%), rotation ([-15,15] degrees), scale ([90,110]\%), shear ([-15,15] degrees) and pixel corruption (20\%). Batches of size $B=64$ samples are used during training.
    
    \item \textbf{Encoders, Decoders and Critic architectures:} All the encoders used for the MNIST experiments consist of neural networks with two hidden layers of 1,024 units and ReLU activations, producing a 2x64-dimensional parameter vector that is used to parameterize mean and variance for the Gaussian posteriors. The decoders used for the VAE experiments also consist of the networks of the same size. Similarly, the critic architecture used for mutual information estimation consists of two hidden layers of 1,204 units each and ReLU activations.
    
    \item \textbf{$\beta$ update policy:} The initial value of $\beta$ is set to $10^{-3}$, which is increased with an exponential schedule starting from the 50,000$^\text{th}$ until 1the 50,000$^\text{th}$ iteration. The value of $\beta$ is then kept constant until the 1,000,000$^\text{th}$ iteration.
    The same annealing policy is used to trained the different $\beta$-VAEs reported in this work.
    \item \textbf{Evaluation:}  The trained representation are evaluated following the well-known protocol described in \citet{Tschannen2019, Tian2019, Bachman2019, Oord2018}. Each logistic regression is trained 5 different balanced splits of the training set for different percentages of training examples, ranging from 1 example per label to the whole training set.
    The accuracy reported in this work has been computed on the disjoint test set.
    Mean and standard deviation are computed according to the 5 different subsets used for training the logistic regression. Mean and variance for the mutual information estimation reported on the Information Plane (Figure~\ref{fig:sample_complexity}) are computed by training two estimation networks from scratch on the final representation of the non-augmented train set. The two estimation architectures consist of 2 hidden layers of 2048 and 1024 units each, and have been trained with batches of size $B=256$ for a total of approximately 25,000 iterations.  The Jensen-Shannon mutual information lower bound is maximized during training, while the numerical estimation are computed using an energy-based bound \citep{Poole2019,Devon2018}. The final values for $I(\rvx;\rvz)$ and $I(\rvy;\rvz)$ are computed by averaging the mutual information estimation on the whole dataset. In order to reduce the variance of the estimator, the lowest and highest 5\% are removed before averaging. This practical detail makes the estimation more consistent and less susceptible to numerical instabilities.
\end{itemize}

\subsubsection{Results and Visualization}
\label{app:mnist_results}
In this section we include additional quantitative results and visualizations which refer to the single-view MNIST experiments reported in section~\ref{subsec:singleviewexp}.

Table~\ref{tab:MNIST_full} reports the quantitative results used for to produce the visualizations reported in Figure~\ref{fig:sample_complexity}, including the comparison between the performance resulting from different mutual information estimators. As the Jensen-Shannon estimator generally resulted in better performance for the InfoMax, MV-InfoMax and MIB models, all the experiments reported on the main text make use of this estimator. 
Note that the InfoMax model with the $I_{\text{JS}}$ estimator is equivalent to the global model reported in \citet{Devon2018}, while MV-InfoMax with the $I_{\text{NCE}}$ estimator results in a similar architecture to the one introduced in \citet{Tian2019}.
\begin{table}[h]
    \centering
    \scriptsize{
    \begin{tabular}{l|cc|cccc}
\multirow{2}{*}{Model} & \multirow{2}{*}{$I(\rvx;\rvz)$ [nats]} & \multirow{2}{*}{$I(\rvz;\rvy)$ [nats]} & \multicolumn{4}{c}{Test Accuracy $[\%]$}                                                           \\
                       &                    &                    & 10 Ex           & 50 Ex          & 3750 Ex        & 60000 Ex            \\\hline
VAE (beta=0)           & 12.5 $\pm$ 0.7     & 2.3 $\pm$ 0.2    & 43.8 $\pm$ 1.6  & 65.6 $\pm$ 3.3 & 89.0 $\pm$ 0.4 & 91.3 $\pm$ 0.1  \\
VAE (beta=4)           & 7.5 $\pm$ 1.0    & 2.0 $\pm$ 0.2    & 55.9 $\pm$ 2.6  & 81.4 $\pm$ 4.0 & 94.2 $\pm$ 0.3 & 96.0 $\pm$ 0.2 \\
VAE (beta=8)           & 3.0 $\pm$ 0.5    & 1.0 $\pm$ 0.1    & 43.8 $\pm$ 2.8  & 61.1 $\pm$ 4.8 & 81.9 $\pm$ 1.1 & 87.2 $\pm$ 0.6 \\
InfoMax ($I_{\text{NCE}}$)      & 12.8 $\pm$ 0.5   & 2.3 $\pm$ 0.2    & 25.4 $\pm$ 1.9  & 39.6 $\pm$ 3.3 & 69.2 $\pm$ 0.7 & 74.6 $\pm$ 0.6 \\
InfoMax ($I_{\text{JS}}$)          & 13.7 $\pm$ 0.7   & 2.2 $\pm$ 0.2    & 35.0 $\pm$ 2.8  & 52.5 $\pm$ 2.8 & 74.4 $\pm$ 1.1 & 78.2 $\pm$ 1.2 \\
MV-InfoMax ($I_{\text{NCE}}$)   & 12.2 $\pm$ 0.7   & 2.3 $\pm$ 0.2    & 50.2 $\pm$ 3.6  & 75.8 $\pm$ 3.8 & 94.6 $\pm$ 0.4 & 96.5 $\pm$ 0.1 \\
MV-InfoMax ($I_{\text{JS}}$)       & 11.1 $\pm$ 1.0   & 2.3 $\pm$ 0.2    & 54.0 $\pm$ 6.1  & 78.3 $\pm$ 4.4 & 94.1 $\pm$ 0.3 & 95.90 $\pm$ 0.08 \\
MIB ($\beta=1, I_{\text{NCE}}$)              & 4.6 $\pm$ 0.7    & 2.1 $\pm$ 0.2    &  81.8 $\pm$ 5.0  &  92.7 $\pm$ 0.9 & 97.19 $\pm$ 0.08 &  97.75 $\pm$ 0.05\\
MIB ($\beta=1, I_{\text{JS}}$)              & 2.4 $\pm$ 0.2    & 2.1 $\pm$ 0.2    & {\bf 97.1 $\pm$ 0.2}  & {\bf 97.2 $\pm$ 0.2} & {\bf 97.70 $\pm$ 0.06} &  {\bf 97.82 $\pm$ 0.01}
\end{tabular}
}
    \caption{Comparison of the amount of input information $I(\rvx;\rvz)$, label information $I(\rvz;\rvy)$, and accuracy of a linear classifier trained with different amount of labeled Examples (Ex) for the models reported in Figure~\ref{fig:sample_complexity}. Both the results obtained using the Jensen-Shannon $I_{\text{JSD}}$ \citep{Devon2018, Poole2019} and the InfoNCE $I_{\text{NCE}}$ \citep{Oord2018} estimators are reported.}
    \label{tab:MNIST_full}
\end{table}

Figure~\ref{fig:MNISTPCA} reports the linear projection of the embedding obtained using the MIB model. The latent space appears to roughly consists of ten clusters which corresponds to the different digits. This observation is consistent with the empirical measurement of input and label information $I(\rvx;\rvz)\approx I(\rvz;\rvy)\approx \log 10$, and the performance of the linear classifier in scarce label regimes. As the cluster are distinct and concentrated around the respective centroids, 10 labeled examples are sufficient to align the centroid coordinates with the digit labels.

\begin{figure}
    \centering
    \begin{minipage}{0.4\textwidth}
        \includegraphics[width=1\textwidth, bb=0 0 288 288]{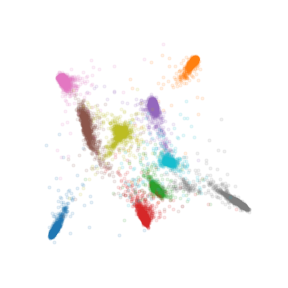}
    \end{minipage}
    \begin{minipage}{0.15\textwidth}
        \includegraphics[width=0.5\textwidth, bb=0 0 36 90]{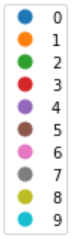}
    \end{minipage}
    \caption{Linear projection of the embedding obtained by applying the MIB encoder to the MNIST test set. The 64 dimensional representation is projected onto the two principal components. Different colors are used to represent the 10 digit classes.}
    \label{fig:MNISTPCA}
\end{figure}

\section{Ablation studies}
\label{app:ablation_studies}

\subsection{Different ranges of data augmentation}
Figure~\ref{fig:corr_ablation} visualizes the effect of different ranges of corruption probabily as data augmentation strategy to produce the two views $\rvv_1$ and $\rvv_2$. The MV-InfoMax Model does not seem to get any advantage from the use increasing amount of corruption, and it representation remains approximately in the same region of the information plane. On the other hand, the models trained with the MIB objective are able to take advantage of the augmentation to remove irrelevant data information and the representation transitions from the top right corner of the Information Plane (no-augmentation) to the top-left. When the amount of corruption approaches 100\%, the mutual redundancy assumption is clearly violated, and the performances of MIB deteriorate.
In the initial part of the transitions between the two regimes (which corresponds to extremely low probability of corruption) the MIB models drops some label information that is quickly re-gained when pixel corruption becomes more frequent. We hypothesize that this behavior is due to a problem with the optimization procedure, since the corruption are extremely unlikely, the Monte-Carlo estimation for the symmetrized Kullback-Leibler divergence is more biased. Using more examples of views produced from the same data-point within the same batch could mitigate this issue.  
\begin{figure}[!ht]
    \centering
    \begin{minipage}{0.36\textwidth}
        \resizebox{1.\textwidth}{!}{
            \includegraphics[width=\textwidth, bb=0 0 103mm 76mm]{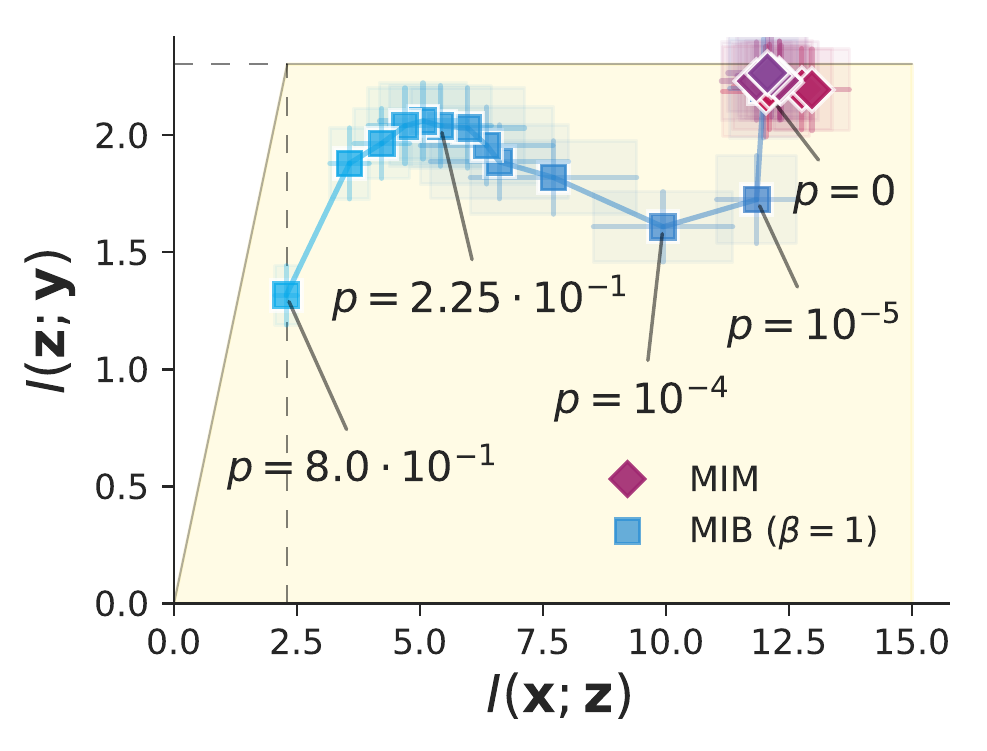}
        }
    \end{minipage}
    \begin{minipage}{0.63\textwidth}
        \resizebox{1.\textwidth}{!}{
            \includegraphics[width=\textwidth, bb=0 0 525 233]{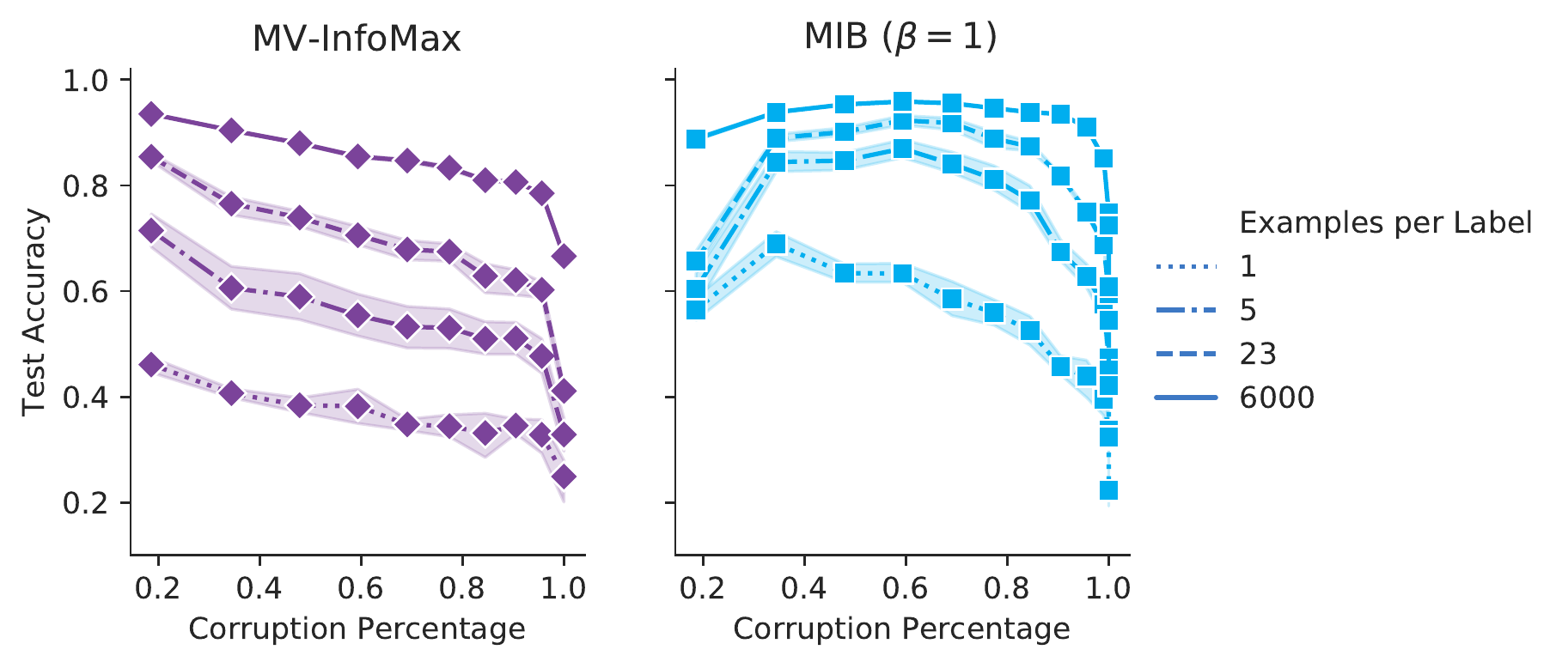}
        }
    \end{minipage}
    \caption{Visualization of the coordinates on the Information Plane (plot on the left) and prediction accuracy (center and right) for the MV-InfoMax and MIB objectives with different amount of training labels and corruption percentage used for data-augmentation.}
    \label{fig:corr_ablation}
\end{figure}

\subsection{Effect of $\beta$}
The hyper-parameter $\beta$ (Equation~\ref{eq:loss_in_practice}) determines the trade-off between sufficiency and minimality of the representation for the second data view. When $\beta$ is zero, the training objective of MIB is equivalent to the Multi-View InfoMax target, since the representation has no incentive to discard any information. When $0<\beta\le 1$ the sufficiency constrain is enforced, while the superfluous information is gradually removed from the representation. Values of $\beta>1$ can result in representations that violate the sufficiency constraint, since the minimization of $I(\rvx;\rvz|\rvv_2)$ is prioritized.
The trade-off resulting from the choice of different $\beta$ is visualized in Figure~\ref{fig:beta_ablation} and compared against $\beta$-VAE.
Note that in each point of the pareto-front the MIB model results in a better trade-off between $I(\rvx;\rvz)$ and $I(\rvy;\rvz)$ when compared to $\beta$-VAE.
The effectiveness of the Multi-View Information Bottleneck model is also justified by the corresponding values of predictive accuracy.

\begin{figure}[!ht]
    \centering
    \begin{minipage}{0.36\textwidth}
        \resizebox{1.\textwidth}{!}{
            \includegraphics[width=\textwidth, bb=0 0 103mm 76mm]{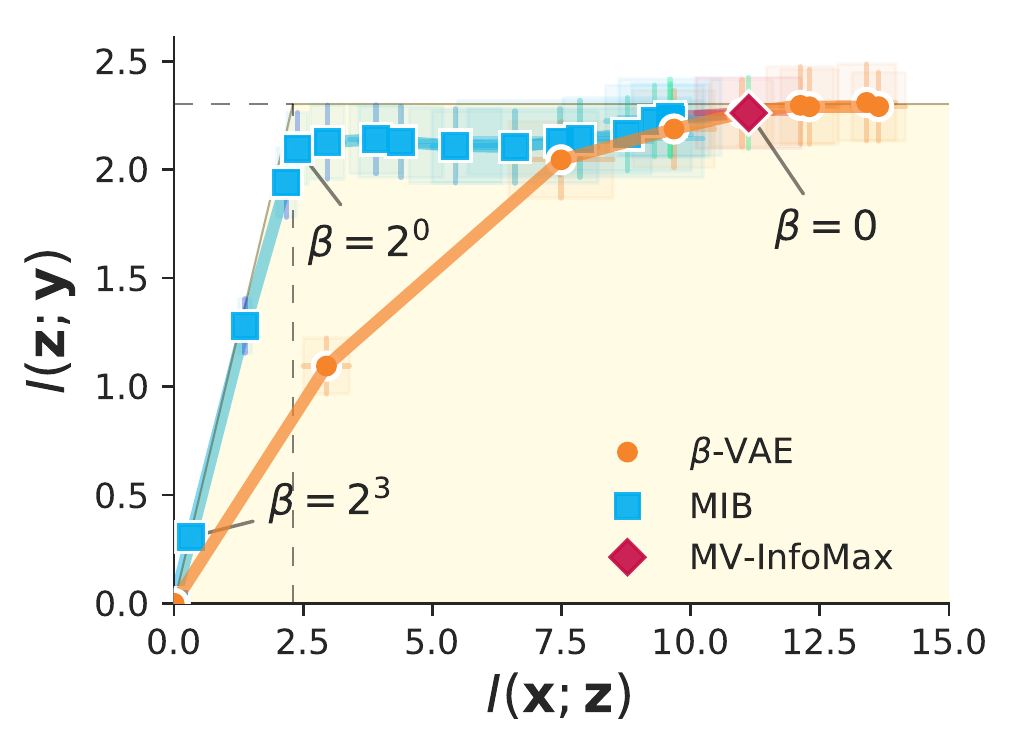}
        }
    \end{minipage}
    \begin{minipage}{0.63\textwidth}
        \resizebox{1.\textwidth}{!}{
            \includegraphics[width=\textwidth, bb=0 0 525 233]{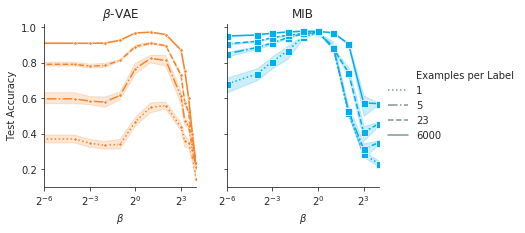}
        }
    \end{minipage}
    \caption{Visualization of the coordinates on the Information Plane (plot on the left) and prediction accuracy (center and right) for the $\beta$-VAE, Multi-View InfoMax and Multi-View Information Bottleneck objectives with different amount of training labels and different values of the respective hyperparameter $\beta$.}
    \label{fig:beta_ablation}
\end{figure}

\end{document}